\newtheorem{lemma}{Lemma}
\newtheorem{theorem}{Theorem}
\newtheorem{definition}{Definition}
\newtheorem{assumption}{Assumption}
\theoremstyle{definition}
\newcommand{\D}{\mathcal{U}}
\newcommand{\real}{\mathbb{R}}
\newcommand{\expectation}{\mathbb{E}}
\newcommand{\expec}{{\Expectation} \,}
\newcommand{\norm}[1]{\left\|#1\right\|}
\newcommand{\abs}[1]{\left|#1\right|}
\newcommand{\paren}[1]{\left(#1\right)}
\newcommand{\pr}[1]{\mathbb{P}\left(#1\right)}
\newcommand{\ft}{\tilde{f}}
\newcommand{\ind}[1]{\mathbf{1}_{\left\{#1\right\}}}
\newcommand{\braces}[1]{\left\{#1\right\}}
\DeclareMathOperator*{\Expectation}{\expectation}
\def\ci{\perp\!\!\!\!\perp}
\icmltitlerunning{Consistency of Causal Inference under the Additive Noise Model}
\begin{document}

\twocolumn[
\icmltitle{Consistency of Causal Inference under the Additive Noise Model}

\icmlauthor{Samory Kpotufe}{samory@ttic.edu}
\icmladdress{Toyota Technological Institute-Chicago \\
\&  Max Planck Institute for Intelligent Systems}
\icmlauthor{Eleni Sgouritsa}{eleni.sgouritsa@tuebingen.mpg.de}
\icmladdress{Max Planck Institute for Intelligent Systems}
\icmlauthor{Dominik Janzig}{dominik.janzig@tuebingen.mpg.de}
\icmladdress{Max Planck Institute for Intelligent Systems}
\icmlauthor{Bernhard Sch\"{o}lkopf}{bs@tuebingen.mpg.de}
\icmladdress{Max Planck Institute for Intelligent Systems}

\icmlkeywords{boring formatting information, machine learning, ICML}

\vskip 0.3in
]

\begin{abstract}
We analyze a family of methods for statistical causal inference from sample under the so-called \emph{Additive Noise Model}.
While most work on the subject has concentrated on establishing the soundness of the Additive Noise Model,
the statistical consistency of the resulting inference methods has received little attention. We derive general conditions under which the
given family of inference methods consistently infers the causal direction in a nonparametric setting.
\end{abstract}
\section{Introduction}



Drawing causal conclusions for a set of observed variables given a sample from their joint distribution is a fundamental problem in science.
Conditional-independence-based methods \cite{pearl2000causality, spirtes2000causation} estimate a set of directed acyclic graphs, all entailing the same conditional independences, from the data. However, these methods can not distinguish between two graphs that entail the same set of conditional independences, the so-called \emph{Markov equivalent} graphs. Consider for example the case of only two observed dependent random variables. Conditional-independence-based methods can not recover the causal graph since $X \rightarrow Y$ and $Y \rightarrow X$ are Markov equivalent.
An elegant basis for causal graphs is the framework of structural causal models (SCMs) \cite{pearl2000causality}, where every observable is a function of its parents and an unobserved independent noise term. This allows us to formulate an assumption on function classes which lets us infer the causal direction in two-variable case.

A special case of SCMs is the \emph{Causal Additive Noise Model} (CAM) \cite{shimizu2006linear, hoyer2009nonlinear, tillman2009nonlinear, peters2011causal, peters2011identifiability} which is given as follows:
given two random variables $X$ and $Y$, $X$ is assumed to cause $Y$ if 
 ({\rm{i}}) $Y$ can be obtained as a function of $X$ plus a noise term independent of $X$, but ({\rm{ii}}) $X$ cannot be obtained as a function of $Y$ plus independent noise, then we infer that $X$ causes $Y$.
In this case, where ({\rm{i}}) and ({\rm{ii}}) hold simultaneously, the CAM is termed \emph{identifiable}.


Initial work on the CAM focused on establishing its theoretical soundness, i.e. understanding the class of distributions
$P_{X, Y}$ for which the CAM is identifiable, i.e. for which
({\rm{i}}) and ({\rm{ii}}) hold simultaneously.
Early work
by \cite{shimizu2006linear} showed that the CAM is identifiable when the functional relationship $Y=f(X) + \eta$ is
linear, provided the independent noise $\eta$ is not Gaussian. Later, \citet{hoyer2009nonlinear}, \citet{zhang2009identifiability} and \citet{peters2011causal}
showed that the CAM is identifiable more generally even if $f$ is nonlinear,
the main technical requirements being that the
marginals $P_X$, and $P_\eta$ are absolutely continuous on $\real$, with $P_\eta$ having support $\real$.
Note that \citet{zhang2009identifiability} also introduces a generalization of the CAM termed {post-nonlinear models}.
Further work by \citet{peters2011identifiability} showed how to reduce causal inference for a network of multiple variables
under the CAM to the case of two variables $X$ and $Y$ discussed so far, by properly extending the conditions
({\rm{i}}) and ({\rm{ii}}) to conditional distributions instead of marginals.
Thus, the soundness of the CAM being established by these various works, the next natural question
is to understand the statistical behavior of the resulting estimation procedures on finite samples.

Current insights into this last question are mostly empirical. Various works \cite{shimizu2006linear, hoyer2009nonlinear, peters2011causal}
have successfully validated procedures based on the CAM (outlined in Section \ref{sec:inferencemethods} below) on a mix of
artificial and real-world datasets where the causal structure to be inferred is clear.
However, on the theoretical side, 
it remains unclear
whether these procedures
can infer causality from samples in general situations where the CAM is identifiable. In the particular case where the functional
relation between $X$ and $Y$ is linear, \citet{hyvarinen2008causalConsistency} proposed a successful 
method shown to be consistent. In a recent Arxived result appearing after our initial submission, 
\citet{cam2013} showed the consistency of a maximum log-likelihood approach to causal inference under the 
multi-variable network extension of \citet{peters2011identifiability}. 

While consistency has been shown for particular procedures, in this paper we are rather interested in 
general conditions under which common approaches, with various algorithmic instantiations, are consistent. 
We derive both algorithmic and distributional conditions for statistical consistency in general situations
where the CAM is identifiable. The present work focuses on the case of two real variables, allowing us to focus on the 
inherent difficulties of achieving consistency with the common algorithmic approaches. These 
difficulties, described in Section \ref{sec:difficulties} have to do with estimating the \emph{degree} of 
independence between noise and input, while the noise is itself estimated from the input and hence is inherently dependent 
on the input.

\subsection{Inference Methods Under the Additive Noise Model}
\label{sec:inferencemethods}
Causal inference methods under the Additive Noise Model typically follow the meta-procedure below. Assume
$f$ and $g$ are the best functional fits under some risk, respectively $Y\approx f(X)$ and $X\approx g(Y)$:
\begin{quote}
 Fit $Y$ as a function $f(X)$, obtain the residuals $\eta_{Y, f} = Y-f(X)$, fit $X$ as a function $g(Y)$, obtain the residuals
 $\eta_{X, g} = X-g(Y)$, decide $X\to Y$ if $\eta_{Y,f} \ci X$ but $\eta_{X, g} \not\ci Y$, decide $Y\to X$ if the reverse holds true, abstain
 otherwise.
\end{quote}
Instantiations thus vary in the regression procedures employed for function fitting, and in the independence measures employed.
Our analysis concerns procedures employing an entropy-based independence measure, which is cheaper 
than usual independence tests. 
 These procedures vary in the regression
and entropy estimators employed. They are presented in detail in Section \ref{sec:algorithm0}.

\subsection{Towards Consistency: Main Difficulties}
\label{sec:difficulties}
Assume ({\rm{i}}) and ({\rm{ii}}) hold so that $X$ \emph{causes} $Y$ under the CAM.
We want to detect this from sufficiently large finite samples. This is consistency in a rough sense.

Establishing consistency of the above meta-procedure faces many subtle difficulties.
The above outlined algorithmic approach consists of four interdependent statistical estimation tasks,
namely two regression problems and two independence-tests. Considered separately, the consistency
of such estimation tasks is well understood, but in the present context the success of the independence 
tests is contingent on successful regression. 

The main difficulty is that although we are observing $X$ and $Y$,
we are not observing the residuals $\eta_{Y, f}$ and $\eta_{X, g}$, but empirical approximations $\eta_{Y, f_n}$ and
$\eta_{X, g_n}$ obtained by estimating $f$ and $g$ as $f_n$ and $g_n$ on a sample of size $n$.

For now, consider just detecting that $\eta_{Y, f}$, $f$ unknown, is independent from $X$.
A good estimator $f_n$ will ensure that $f_n$ and $f$ are \emph{close}, usually
in an $L_2$ sense (i.e. $\expec_X\abs{f_n(X) - f(X)}^2\approx 0$). Hence $\eta_{Y, f_n}$ is \emph{close} to $\eta_{Y, f}$,
but unfortunately this does not imply that $\eta_{Y, f_n}\ci X$ if $\eta_{Y, f}\ci X$. In fact it is
easy to construct r.v.'s $A, B, C$ such that $A\ci B$, $\abs{B -C}<\epsilon$, for arbitrary $\epsilon$, but $C\not \ci A$.
Thus, the estimate $\eta_{Y, f_n}$ might be \emph{close} to $\eta_{Y, f}$, yet it might still appear dependent on $X$ even if
$\eta_{Y, f}$ is not. Complicating matters further, $\eta_{Y, f_n}$ and $\eta_{Y, f}$ would only be close in an
average sense (instead of close for every value of $X$) since $f_n$ and $f$ are typically only close in an average sense (e.g. close in $L_2$).

Now consider the full causal discovery, i.e. consider also detecting that $\eta_{X, g}$ depends on $Y$.
To achieve consistency, the independence test employed must detect more
dependence between $\eta_{X, g_n}$ and $Y$ than between $\eta_{Y, f_n}$ and $X$.
This will depend on how the particular independence test is influenced by errors in
the particular regression procedures employed, and the relative rates at which these various procedures converge.

As previously mentioned, we will consider a family of independence-tests based on comparing sums of entropies.
We will handle the above difficulties and derive conditions for consistency
by first understanding how the various estimated entropies converge as a function
of regression convergence ($L_2$ convergence).

We do not consider the question of finite-sample convergence rates for causal estimation under the CAM.
In fact, it is not even clear whether it is generally possible to establish such rates. This is
because it is generally possible that the Bayes best fits $f(x)=\expec[Y|x]$ is smooth while $g(y)=\expec[X|y]$ is not
even continuous; yet it is well known that without smoothness or similar structural conditions,
arbitrarily bad rates of convergence are possible in regression (see e.g. \cite{GKKW:81}, Theorem 3.1).

However, along the way of deriving consistency, we analyze
the convergence of various quantities, which 
appear to affect the finite-sample behavior of the meta-procedure. 
In particular the tails of the additive noise and the richness of the regression algorithms seem to have a
strong effect on convergence. This is verified in controlled simulations.
The theoretical details are discussed in Section \ref{sec:inference}.

\section{Preliminaries}
\subsection{Setup and Notation}
We let $H$ and $I$ denote respectively differential entropy, and mutual information \cite{cover1994elements}.
Given a density $p$ we will at times use the (abuse of) notation $H(p)$ when a r.v. is unspecified.

The distribution of a r.v. $Z$ is denoted $P_Z$, and its density when it exists is denoted
$p_Z$.

Throughout the analysis we will be concerned with residuals from regression fits. We use the
following notation.
\begin{definition}
 For a function $f: \real\mapsto \real$, we consider either of the {\bf residuals}:
$\eta_{Y, f} \triangleq Y - f(X)$ and $\eta_{X, f} \triangleq X - f(Y)$.
\end{definition}

The Causal Additive Noise Model is captured as follows:

\begin{definition}[CAM]
\label{def:inass}
Given r.v.'s $X, Y$, a function $f:\real\mapsto\real$ and a r.v. $\eta$, we write $X\xrightarrow{f, \eta} Y$ if the following holds:
\begin{itemize}
 \item[({\rm{i}})]$P_{X, Y}$ is generated as $X\sim P_X$, and
$Y=f(X) + \eta $, where the noise r.v. $\eta$ has $0$ mean and $\eta\ci X$;
\item[({\rm{ii}})] for any $g:\real\mapsto \real$, $\eta_{X,g}\triangleq X-g(Y)$ depends on $X$.

\end{itemize}
%
%

We write $X\to Y$ when $f$ and $\eta$ are clear from context.
\end{definition}


\section{Causal Inference Procedures}
\label{sec:algorithm0}
\subsection{Main Intuition}
\begin{lemma}
\label{lem:complexity}
Consider any absolutely continuous joint-distribution $P_{X, Y}$ on $X, Y\in \real$. For any two functions $f, g: \real \mapsto \real$
we have
\begin{align*}
 H(X) + H(\eta_{Y, f}) &= H(Y) + H(\eta_{X, g}) \\
 &\quad -\braces{I(\eta_{X, g}, Y) - I(\eta_{Y, f}, X)}.
\end{align*}
\end{lemma}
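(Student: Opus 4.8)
The plan is to compute the joint differential entropy $H(X,Y)$ in two different ways, by changing variables once to $(X,\eta_{Y,f})$ and once to $(Y,\eta_{X,g})$, and then to invoke the standard identity relating joint entropy, marginal entropies, and mutual information.

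First I would note that the map $T_f:(x,y)\mapsto (x,\,y-f(x))$ is a bijection of $\real^2$ with inverse $(u,v)\mapsto(u,\,v+f(u))$, and its Jacobian determinant is identically $1$ (the matrix is lower/upper triangular with unit diagonal). Since $P_{X,Y}$ is absolutely continuous with density $p_{X,Y}$, the pushforward $(X,\eta_{Y,f}) = T_f(X,Y)$ has density $p_{X,Y}(u,\,v+f(u))$, and the change-of-variables formula for differential entropy gives $H(X,\eta_{Y,f}) = H(X,Y)$; symmetrically, using $T_g:(x,y)\mapsto(x-g(y),\,y)$, we get $H(\eta_{X,g},Y) = H(X,Y)$. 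Next I would apply the elementary identity $H(A,B) = H(A) + H(B) - I(A,B)$ (equivalently $I(A,B) = H(A)+H(B)-H(A,B)$) to each of the two pairs:
\begin{align*}
H(X,Y) &= H(X) + H(\eta_{Y,f}) - I(\eta_{Y,f}, X),\\
H(X,Y) &= H(Y) + H(\eta_{X,g}) - I(\eta_{X,g}, Y).
\end{align*}
Equating the right-hand sides and rearranging yields exactly
\[
H(X) + H(\eta_{Y,f}) = H(Y) + H(\eta_{X,g}) - \braces{I(\eta_{X,g},Y) - I(\eta_{Y,f},X)},
\]
which is the claim.

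The main obstacle is purely a matter of justifying the manipulations rather than any real difficulty: one must be sure the differential entropies and mutual informations involved are well-defined and finite (so that the additions and subtractions are legitimate and not of the form $\infty - \infty$), and one must state the mild regularity needed on $f$ and $g$ — measurability suffices for $T_f,T_g$ to be bi-measurable and for the change-of-variables formula to apply. Assuming, as the statement does, that $P_{X,Y}$ is absolutely continuous (and implicitly that the relevant entropies exist), the argument is a short chain of identities with no further estimates required.
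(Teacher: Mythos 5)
Your proposal is correct and follows essentially the same route as the paper: both derive the two decompositions $H(X,Y) = H(X) + H(\eta_{Y,f}) - I(\eta_{Y,f},X)$ and $H(X,Y) = H(Y) + H(\eta_{X,g}) - I(\eta_{X,g},Y)$ and equate them. The only difference is cosmetic: the paper invokes the chain rule and the shift-invariance of conditional differential entropy ($H(Y|X) = H(\eta_{Y,f}|X)$), whereas you justify the same step via the unit-Jacobian change of variables $(x,y)\mapsto(x,y-f(x))$, which is a slightly more explicit justification of the identical identity.
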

\begin{proof}
 By the chain rule of differential entropy we have
 \begin{align*}
  H(X, Y) &= H(X) + H(Y| X) = H(X) + H(\eta_{Y, f}| X) \\
  &= H(X) + H(\eta_{Y, f}) - I(\eta_{Y, f}, X), \text{ similarly }
 \end{align*}
 \begin{align*}
  H(X, Y) = H(Y) + H(\eta_{X, g}) - I(\eta_{X, g}, Y).
 \end{align*}
Equate the two r.h.s above and rearrange.
\end{proof}

Note that whenever $\eta_{Y, f}, \ci X$, we have $I(\eta_{Y, f}, X)=0$.
Therefore, by the above lemma, if $\eta_{Y, f}, \ci X$
then $C_{XY}\triangleq H(X) + H(\eta_{Y, f})$ is
smaller than $C_{YX}\triangleq H(Y) + H(\eta_{X, g})$. This yields a measure of independence
which is relatively cheap to estimate. In particular the test depends only on the marginal
distributions of the r.v.'s $X, Y$ and functional residuals, and does not involve estimating
joint distributions or conditionals, as is implicit in most independence tests.
We analyze a family of procedures based on this idea. This family is given in the next subsection.

\subsection{Meta-Algorithm}
\label{sec:algorithm}
Let $\braces{(X_i, Y_i)}_1^n = \braces{(x_1, y_1), \dots, (x_n, y_n)}$ be a finite sample drawn from $P_{X, Y}$.
Let $H_n(X)$ and $H_n(Y)$ be respective estimators of $H(X)$ and $H(Y)$ based on the sample
$\braces{(X_i, Y_i)}_1^n$.

We consider the following family of inference procedures:

\begin{quote}
\footnotesize
Given an i.i.d sample $\braces{(X_i, Y_i)}_1^n$ from $P_{X, Y}$, let $f_n$ be returned by an algorithm which fits $Y$ as $f_n(X)$ and
$g_n$ be returned by an algorithm which fits $X$ as $g_n(Y)$. Let $H_n$ denote an
entropy estimator. Given a threshold parameter $\tau_n\xrightarrow{n\to \infty}0$:

Decide $X\to Y$ if
$$H_n(X) + H_n(\eta_{Y, f_n}) +\tau_n \leq H_n(Y) + H_n(\eta_{X, g_n}).$$
Decide $Y\to X$ if
$$H_n(Y) + H_n(\eta_{X, g_n}) +\tau_n \leq H_n(X) + H_n(\eta_{Y, f_n}).$$
Abstain otherwise.
\end{quote}

The analysis in this paper is carried with respect to the $L_{2, P_X}$  and $L_{2, P_Y}$
 functional norms defined as follows.

\begin{definition}
For $f:\real\mapsto \real$, and a measure $\mu$ on $\real$,
the $L_{2, \mu}$ norm is given as
$\norm{f}_{2, \mu}=\paren{\int_t f(t)^2 \, d\mu(t)}^{1/2}.$
\end{definition}

%

We assume the internal procedures $f_n, g_n, H_n$ have the following
consistency properties.

\begin{assumption}
 \label{ass:consistency} The internal procedures are consistent:
 \begin{itemize}
  \item Suppose $\expec  Y^2 <\infty$. Let $f(x) \triangleq \expec [Y|x]$.
  Then $\norm{f_n - f}_{2, P_X}\xrightarrow{P} 0$.
  \item Suppose $\expec X^2 < \infty$. Let $g(y) \triangleq \expec[X|y]$.
  Then $\norm{g_n - g}_{2, P_Y}\xrightarrow{P}0$.
  \item Suppose $Z$ has bounded variance, and has continuous density $p_Z$ such that 
  $\exists \, T, C>0, \alpha>1, \quad \forall \abs{t}>T, \, p_Z(t)\leq C\abs{t}^{-\alpha}$.   
   Then
  $\abs{H_n(Z)-H(Z)}\xrightarrow{P} 0$.
 \end{itemize}
\end{assumption}

Many common nonparametric regression procedures (e.g. kernel, $k$-NN, Kernel-SVM, spline regressors) are
consistent in the above sense \cite{GKKW:81}. Also the consistency of a variety
of entropy estimators (e.g. plug-in entropy estimators) is well established \cite{beirlant1997nonparametric}.

\section{Technical overview of results}
\label{sec:inference} 

\begin{figure*}[t]
\centering
\subfigure[]
  {\includegraphics[height=1.4in]{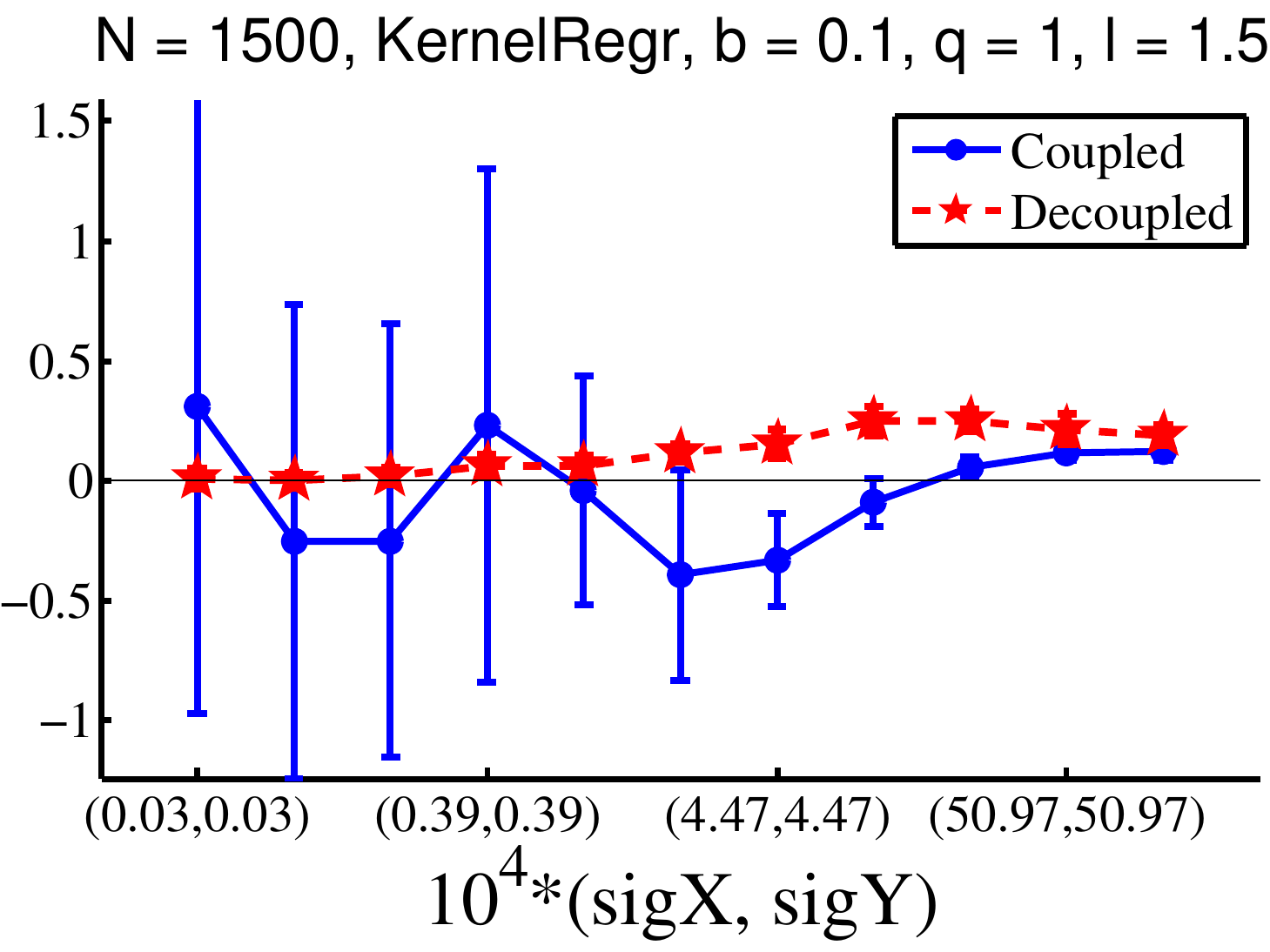}
  \label{fig:Sig_KerRegr_b=0.1}}
\subfigure[]
  {\includegraphics[height=1.4in]{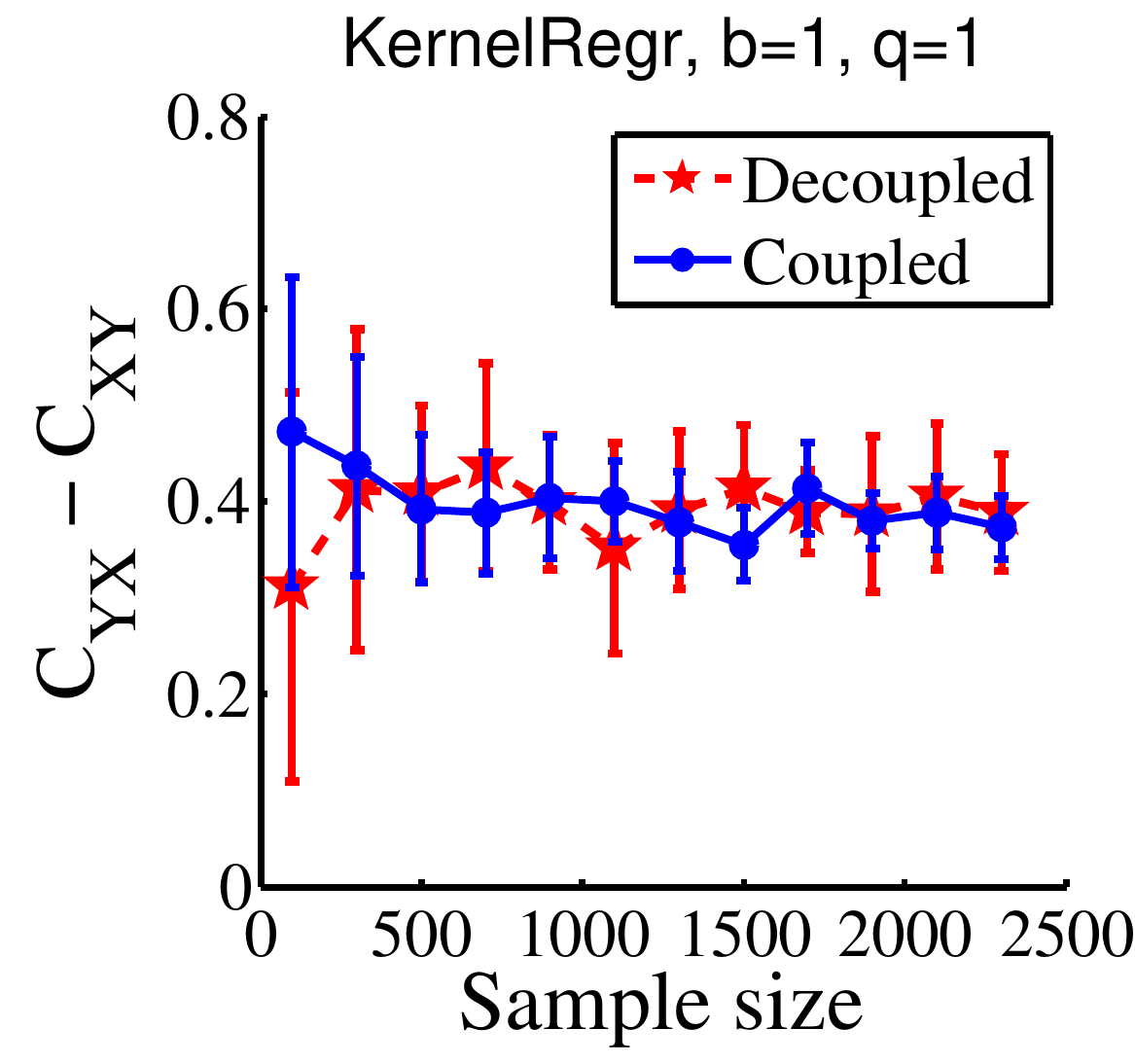}
  \label{fig:N_KerRegr}}
\subfigure[]
  {\includegraphics[height=1.4in]{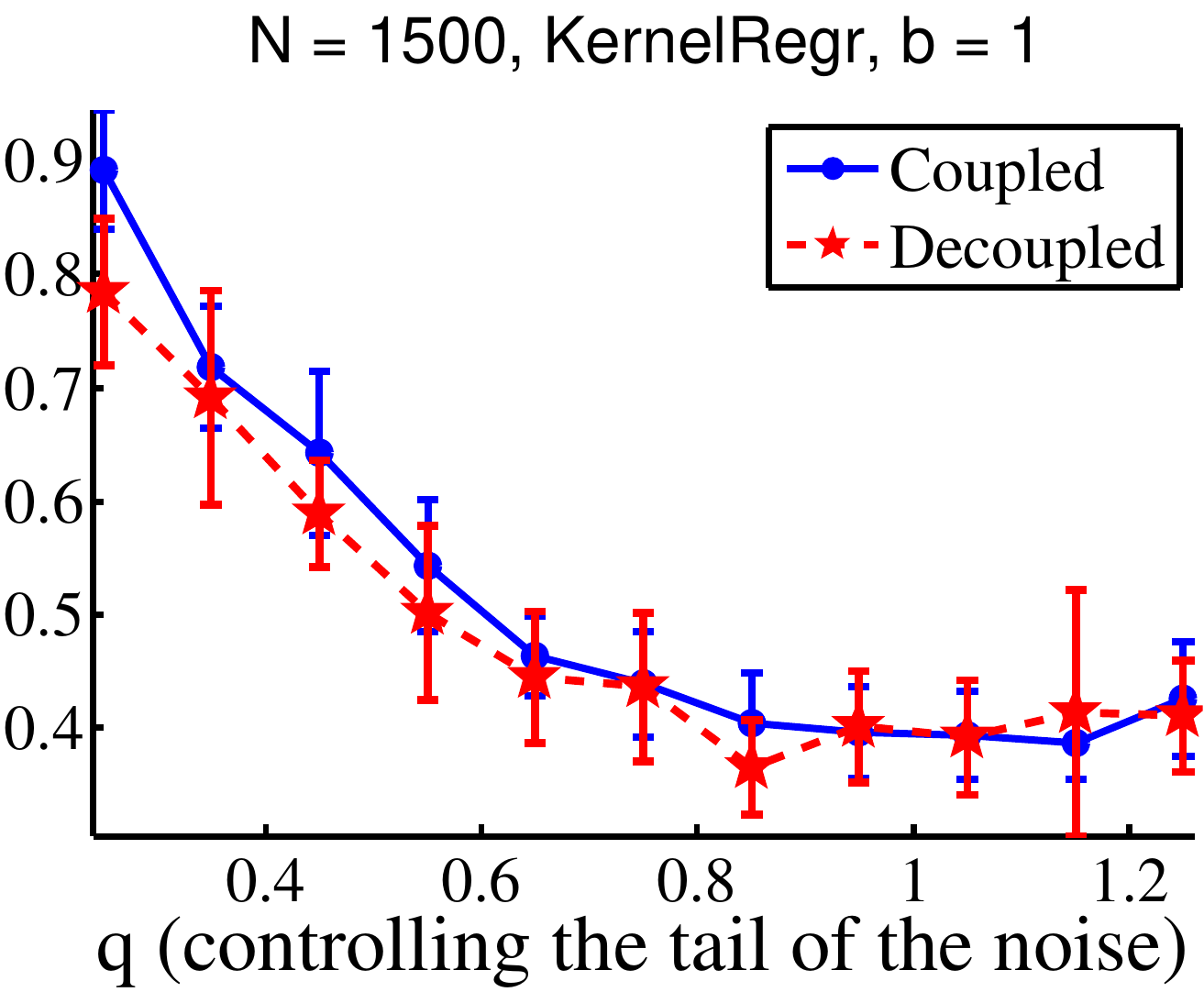}
  \label{fig:Q_KerRegr}}
\caption[font=small,labelsep=none] { \label{results} Plots of the difference between the complexity measures ($C_{YX}-C_{XY}$)
for coupled and decoupled-estimation in various scenarios. Simulated data is generated as $Y = bX^3 + X + \eta$. $X$
is sampled from a uniform distribution on the open interval $(-2.5,2.5)$, while $\eta$ is sampled
as $\abs{\mathcal{N}}^q\cdot \text{sign}(\mathcal{N})$ where $\mathcal{N}$ is a standard normal.
$b$ controls the strength of the nonlinearity of the function and $q$ controls the non-Gaussianity of the noise: $q = 1$ gives a Gaussian, while
$q > 1$ and $q < 1$ produces super-Gaussian and sub-Gaussian distributions, respectively.
For entropy estimation we employ a resubstitution estimate using a kernel density estimator tuned against log-likelihood
\cite{beirlant1997nonparametric} and
for regression estimator we use kernel regression (KR).
For every combination of the parameters, each experiment was repeated $10$ times, and average results for ($C_{YX}-C_{XY}$)
are reported along with standard deviation across repetitions.
Plot (a): increasing kernel bandwidth of regressor geometrically (by factors of $l=1.5$), i.e. decreasing richness of the algorithm.
When the capacity of the regression algorithm is too large,
the variance of the causal inference is large for coupled-estimation (due to overfitting) but remains low for decoupled-estimation.
Plot (b): increasing sample size (bandwidth of KR tuned by cross-validation). For tuned bandwidth, the variance of
the causal inference is only due to the sample size, so the coupled-estimation (which estimates
everything on a larger sample) becomes the better procedure.
Plot (c): increasing $q$, i.e. the tail of the noise is made sharper (KR tuned by cross-validation). For
faster decreasing tail of the noise, the causal inference becomes better.
The experiments of Figures (b) and (c) were repeated using kernel ridge regression (KRR) tuned by cross-validation
(see supplementary appendix).
For properly tuned parameters, the selection of regression method does not seem to matter for the causal inference results.}
\end{figure*}

We consider the following two versions of the above meta-procedure.
The analysis (Section \ref{sec:analysis}) is divided accordingly.

\begin{definition}[Decoupled-estimation]
\label{def:decoupled}
 $f_n$ and $g_n$ are learned on half of the sample $\braces{(X_i, Y_i)}_1^n$,
and the $H_n\paren{\eta_{Y, f_n}}$ and $H_n\paren{\eta_{X, g_n}}$ are learned on the
other half of the sample (w.l.o.g. assume $n$ is even). $H_n(X)$ and $H_n(Y)$ could be
learned on either half or on the entire sample.
\end{definition}

\begin{definition}[Coupled-estimation]
 All $f_n$, $g_n$ and entropies $H_n$ are learned on the entire sample $\braces{(X_i, Y_i)}_1^n$.
\end{definition}

Our most general consistency result (Theorem \ref{theo:relating_Rp_H}, Section \ref{sec:decoupled}) concerns decoupled-estimation.
By decoupling regression and entropy estimations, we
reduce the potential of overfitting, during entropy estimation, the generalization error of regression.
This generalization error could be large if the regression algorithms are too rich (e.g. ERM over large functional classes).
Our simulations show that, when the regression algorithm is too rich, the variance of the causal inference is large for coupled-estimation
but remains low for decoupled-estimation (Fig. \ref{fig:Sig_KerRegr_b=0.1}). By decreasing the richness of the class (simulated by increasing the
kernel bandwidth for a kernel regressor) the source of variance shifts to the sample size, and coupled-estimation (which estimates
everything on a larger sample) becomes the better procedure and tends to converge faster
(Fig. \ref{fig:N_KerRegr}).

For the consistency result of Theorem \ref{theo:relating_Rp_H} we make no assumption on the richness of the regression algorithms,
but simply assume that they converge in $L_2$ (Assumption \ref{ass:consistency}). The main technicality is to then show that entropies
of residuals are locally continuous relative to the $L_2$ metric in both causal and anticausal directions.

For coupled-estimation, the main difficulty is the following. Even though the entropy estimators are consistent for a \emph{fixed} distribution,
the distribution of the residuals change with $f_n$ and $g_n$, thus with every random sample
(this problem is alleviated by decoupling the estimation). However, if the richness of the regression algorithms is controlled, in
other words if the set of potential $f_n$ and $g_n$ is not too rich, then the entropy estimate for residuals might converge. We show
in Theorem \ref{theo:coupled} (Section \ref{sec:coupled}) that if we employ kernel regressors with properly chosen bandwidths, and kernel-based entropy
estimators with sufficiently smooth kernels, then the resulting method is consistent for causal inference.

Both consistency results of Theorem \ref{theo:relating_Rp_H} and Theorem \ref{theo:coupled} rely on tail assumptions on
the additive noise $\eta$ (where $X\xrightarrow{f, \eta}Y$). We assume an exponentially decreasing tail for the more
difficult case of coupled-estimation, but need only a mild assumption of polynomially decreasing tail in the case
of decoupled-estimation. Note that it is common to assume that $\eta$ has Gaussian tail, and our assumptions are milder
in that respect.

Interestingly, our analysis for Theorem \ref{theo:relating_Rp_H} suggests that convergence of causal inference is likely faster if the noise
$\eta$ has faster decreasing tail
(see Lemma \ref{lem:propOfResiduals1}). This is verified in our simulations where we vary
the tail of $\eta$ (Fig. \ref{fig:Q_KerRegr}).

\section{Analysis}
\label{sec:analysis}
\subsection{Consistency for Decoupled-estimation}
\label{sec:decoupled}
In this section we establish a general consistency result for the meta-procedure
above. The main technicality consists of relating differential entropy of residuals to the $L_2$-norms of residuals (i.e. to the
error made in function estimation). We henceforth let $\Sigma$ denote the Lebesgue measure.

The analysis in this section uses the following polynomial tail assumption on $\eta$.
We note that Assumption \ref{ass:continuityofP} satisfies the idenfiability conditions of
\cite{zhang2009identifiability}.

\begin{assumption}[Tail]
\label{ass:continuityofP}
$P_{X, Y}$ is generated as follows: $X\xrightarrow{f, \eta} Y$ for some bounded function $f$,
with bounded derivative on $\real$.
$P_X$ has bounded support, and both $P_X$ and $P_\eta$ have densities $p_X$, $p_\eta$
with bounded derivatives on $\real$.  Furthermore, we assume $\eta$ has bounded variance, and $p_\eta$ satisfies,
for some $T>0$, $C>0$, and $\alpha>1$:
\begin{align}
 \forall \abs{t}>T, \quad p_\eta(t) \leq C\abs{t}^{-\alpha}.\label{eq:tailassumption}
\end{align}
\end{assumption}

Note that, since the unknown target functions are assumed bounded,
any consistent regressor can be appropriately truncated while maintaining consistency. We therefore
have the following technical assumption on the regressors.

\begin{assumption}
 \label{ass:boundedness}
The regression procedures return bounded functions: 
 $\lim_{n\to \infty} \max\braces{\norm{f_n(t)}_\infty, \norm{g_n(t)}_\infty} <\infty$.
\end{assumption}

\begin{theorem}[General consistency for decoupled-estimation]
Suppose $X\xrightarrow{f, \eta} Y$ for some $f, \eta$,
and $P_{X, Y}$ satisfies the tail Assumption \ref{ass:continuityofP}.
Suppose $f_n$, $g_n$, and $H_n$ are consistent procedures satisfying Assumption \ref{ass:consistency} and \ref{ass:boundedness}.
Let the meta-algorithm be decoupled as in Definition \ref{def:decoupled}.

Then the probability of correctly deciding $X\to Y$ goes to $1$ as $n\to \infty$.
\label{theo:relating_Rp_H}
\end{theorem}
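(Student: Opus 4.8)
The plan is to reduce, via Lemma~\ref{lem:complexity}, to showing that each of the four estimated entropies appearing in the decision rule converges in probability to its population analogue, while at the population level the two sides of the inequality are separated by a fixed positive gap. Let $f(x)=\ex{Y\mid x}$ and $g(y)=\ex{X\mid y}$ be the Bayes regressors (to which $f_n,g_n$ converge in $L_2$ by Assumption~\ref{ass:consistency}). Since $X\xrightarrow{f,\eta}Y$, part (i) of Definition~\ref{def:inass} gives $\ex{Y\mid X}=f(X)$, hence $\eta_{Y,f}=\eta\ci X$ and $I(\eta_{Y,f},X)=0$; part (ii), applied to the Bayes $g$, gives $\delta\triangleq I(\eta_{X,g},Y)>0$. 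By Lemma~\ref{lem:complexity},
\[
\bigl(H(Y)+H(\eta_{X,g})\bigr)-\bigl(H(X)+H(\eta_{Y,f})\bigr)=\delta>0 .
\]
So once we show $H_n(X)\xrightarrow{P}H(X)$, $H_n(Y)\xrightarrow{P}H(Y)$, $H_n(\eta_{Y,f_n})\xrightarrow{P}H(\eta_{Y,f})$ and $H_n(\eta_{X,g_n})\xrightarrow{P}H(\eta_{X,g})$, the decision statistic $D_n\triangleq\bigl(H_n(Y)+H_n(\eta_{X,g_n})\bigr)-\bigl(H_n(X)+H_n(\eta_{Y,f_n})\bigr)$ satisfies $D_n\xrightarrow{P}\delta>0$; since $\tau_n\to0$ we get $\pr{D_n\geq\tau_n}\to1$ and $\pr{D_n\leq-\tau_n}\to0$, so with probability tending to $1$ the algorithm decides $X\to Y$ (and neither abstains nor decides $Y\to X$).

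The marginal entropies are easy: $X$ has bounded support, so $p_X$ is continuous (it has bounded derivative), has bounded variance, and trivially satisfies the polynomial-tail bound required by Assumption~\ref{ass:consistency}; and $Y=f(X)+\eta$ with $f$ bounded and $X$ compactly supported, so $p_Y(y)=\int p_\eta\bigl(y-f(x)\bigr)\,dP_X(x)$ is continuous, has bounded variance, and inherits the polynomial tail of $\eta$. Assumption~\ref{ass:consistency} then gives the first two convergences.

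The crux is the two residual entropies, where I would exploit decoupling (Definition~\ref{def:decoupled}): conditionally on the first half-sample $S_1$, which determines $f_n$ and $g_n$, the residuals $\{Y_i-f_n(X_i)\}$ and $\{X_i-g_n(Y_i)\}$ over the second half are i.i.d.\ draws from the (now fixed) laws $\mathcal{L}(\eta_{Y,f_n}\mid S_1)$ and $\mathcal{L}(\eta_{X,g_n}\mid S_1)$. Writing
\[
\abs{H_n(\eta_{Y,f_n})-H(\eta_{Y,f})}\;\leq\;\abs{H_n(\eta_{Y,f_n})-H(\eta_{Y,f_n}\mid S_1)}\;+\;\abs{H(\eta_{Y,f_n}\mid S_1)-H(\eta)},
\]
and likewise in the anticausal direction, the first term is handled by applying Assumption~\ref{ass:consistency} conditionally on $S_1$ — legitimate because $\eta_{Y,f_n}=\eta+(f-f_n)(X)$ with a bounded perturbation (Assumption~\ref{ass:boundedness}) has continuous density, bounded variance and polynomial tail with constants eventually uniform in $n$, and $\eta_{X,g_n}=X-g_n(Y)$ is compactly supported with continuous density — followed by a routine conditioning argument to remove the dependence on $S_1$. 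The second term is a statement of local continuity of differential entropy along $L_2$-perturbations: using $\norm{f_n-f}_{2,P_X}\xrightarrow{P}0$, $\norm{g_n-g}_{2,P_Y}\xrightarrow{P}0$, the regularity in Assumption~\ref{ass:continuityofP}, and Lemma~\ref{lem:propOfResiduals1}, I would show the residual density converges to that of the true noise uniformly on compacts while its tail contribution to $-\int p\log p$ stays uniformly negligible; since $u\mapsto u\log u$ is uniformly continuous on bounded intervals, the entropies converge. In the causal direction this is clean because $(f-f_n)(X)$ is a function of $X$ and $\eta\ci X$, so $p_{\eta_{Y,f_n}}(t)=\expec\,p_\eta\bigl(t-(f-f_n)(X)\bigr)$ and $\norm{p_{\eta_{Y,f_n}}-p_\eta}_\infty\leq\norm{p_\eta'}_\infty\,\expec\abs{(f-f_n)(X)}\leq\norm{p_\eta'}_\infty\,\norm{f_n-f}_{2,P_X}\to0$, while the polynomial tail of $\eta$ (undisturbed up to a bounded shift) controls the tail integrals.

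I expect the main obstacle to be precisely the local-continuity step in the \emph{anticausal} direction, which is the difficulty flagged in Section~\ref{sec:difficulties}: the perturbation $(g-g_n)(Y)$ is not independent of $\eta_{X,g}$, so $\eta_{X,g_n}$ is not a convolution and the clean argument above breaks. One must instead work with the joint law, through the unit-Jacobian change of variables $(X,Y)\mapsto(X-g_n(Y),Y)$ and the factorization $p_{X,Y}(x,y)=p_X(x)\,p_\eta\bigl(y-f(x)\bigr)$, obtaining $p_{\eta_{X,g_n}}(t)=\int p_X\bigl(t+g_n(y)\bigr)\,p_\eta\bigl(y-f(t+g_n(y))\bigr)\,dy$; bounding $\abs{p_{\eta_{X,g_n}}(t)-p_{\eta_{X,g}}(t)}$ via the bounded derivatives of $p_X,p_\eta,f$ produces an error of the form $\int\abs{g_n(y)-g(y)}\,w(y)\,dy$ for a weight $w$ with polynomial decay inherited from $p_\eta$, and the delicate point is converting this Lebesgue-weighted error into the $P_Y$-weighted quantity $\norm{g_n-g}_{2,P_Y}$ that is the only thing we control. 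Compactness of the support of $\eta_{X,g_n}$ (since $X$ is bounded and $g_n$ is eventually bounded) removes the tail issue in this direction, so once sup-norm density convergence is established the entropy convergence follows as in the causal case.
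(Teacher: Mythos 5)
Your proposal is correct and follows essentially the same route as the paper's proof: reduce via Lemma~\ref{lem:complexity} to a fixed positive population gap, handle the marginal entropies by Assumption~\ref{ass:consistency}, and split each residual-entropy error into an estimation term (killed by decoupling, applying the entropy estimator's consistency conditionally on the half-sample that fixes $f_n,g_n$) plus a population term controlled by the $L_2$-to-sup-norm density bound of Lemma~\ref{lem:propOfResiduals1} and the entropy-continuity argument (Lemma~\ref{lem:entropy}). The only deviations are cosmetic: your direct convolution bound $\norm{p_{\eta_{Y,f_n}}-p_\eta}_\infty\leq\norm{p_\eta'}_\infty\norm{f_n-f}_{2,P_X}$ in the causal direction, and your re-derivation sketch of the anticausal case, both of which are already subsumed by Lemma~\ref{lem:propOfResiduals1} as used in the paper.
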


To prove the theorem, we have to understand how the estimated entropies converge as a function
of the $L_2$ error in regression estimation. We will proceed by bounding the distance between
the densities $p_{\eta_{Y, f}}$ and $p_{\eta_{Y, f'}}$ of the residuals of functions $f$ and $f'$
in terms of the $L_2$ distance between $f$ and $f'$ (Lemma \ref{lem:propOfResiduals1});
this will then be used to bound the difference in the entropy of such residuals.

Given Assumption \ref{ass:continuityofP}, the following lemma establishes some useful properties
of the distribution $P_{X, Y}$ and of the distribution of certain residuals.
It is easy to verify that under our assumptions, all distributions under consideration in the lemma are absolutely continuous.

\begin{lemma}[Properties of induced densities]
\label{lem:propInduced}
 Suppose $P_{X, Y}$ satisfies Assumption \ref{ass:continuityofP} for some $f, \eta$, and $\alpha>1$.
 We then have the following: \\
 ({\rm{i}}) $p_{X, Y}$ has a bounded gradient on $\real^2$,\\
 ({\rm{ii}}) consider functions $f', g: \real\mapsto \real$ and suppose $\sup\abs{f'}$ and $\sup\abs{g}$ are at most $T_0$ for some $T_0$;
 then there exists $T'>0$ depending on $T_0$, and $C'>0$ such that $\forall \abs{t}>T'$
 \begin{align*}
\braces{p_{X, Y}(\cdot, t), p_{X, Y}(t, \cdot), p_{\eta_{Y, f'}}(t), p_{\eta_{X, g}}(t)} \leq C' \abs{t}^{-\alpha}. 
 \end{align*}
In particular, the above holds for $g(y)\triangleq \expec[X|Y=y]$.
\end{lemma}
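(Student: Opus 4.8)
The plan is to exploit the product structure that the CAM forces on the joint density. Since $Y=f(X)+\eta$ with $\eta\ci X$, conditionally on $X=x$ the variable $Y$ has density $p_\eta(\cdot-f(x))$, so
\[
 p_{X,Y}(x,y)=p_X(x)\,p_\eta\paren{y-f(x)}.
\]
Before using this I would record one elementary fact: a probability density $p$ with $\norm{p'}_\infty\le M$ satisfies $\norm{p}_\infty\le\sqrt{2M}$ (if it reached a value $v$ it would stay $\ge v/2$ on an interval of length $v/M$, forcing $v^2/(2M)\le\int p=1$). Applied to $p_X$ and $p_\eta$, together with Assumption~\ref{ass:continuityofP}, this gives that $p_X,p_\eta,p_X',p_\eta',f,f'$ are all bounded on $\real$; write $\mathrm{supp}(P_X)\subseteq[-B,B]$.

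For (i), differentiate the product: $\partial_y p_{X,Y}(x,y)=p_X(x)\,p_\eta'(y-f(x))$ and $\partial_x p_{X,Y}(x,y)=p_X'(x)\,p_\eta(y-f(x))-f'(x)\,p_X(x)\,p_\eta'(y-f(x))$. Every factor belongs to the bounded family just listed, so $\norm{\nabla p_{X,Y}}_\infty<\infty$.

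For (ii), two of the four bounds are immediate from bounded supports. Since $p_X(t)=0$ for $\abs{t}>B$, we get $p_{X,Y}(t,y)=0$ for every $y$ once $\abs{t}>B$; and $\eta_{X,g}=X-g(Y)$ is a.s.\ bounded by $B+T_0$, so its density (which exists by absolute continuity of $P_{X,Y}$) vanishes for $\abs{t}>B+T_0$. For the remaining two we invoke the tail bound \eqref{eq:tailassumption}. First, $p_{X,Y}(x,t)=p_X(x)\,p_\eta(t-f(x))$ is nonzero only for $x\in[-B,B]$, where $\abs{t-f(x)}\ge\abs{t}-\norm{f}_\infty$; hence for $\abs{t}>\max\braces{2\norm{f}_\infty,2T}$ we have $\abs{t-f(x)}\ge\abs{t}/2>T$, so $p_{X,Y}(x,t)\le\norm{p_X}_\infty\,C(\abs{t}/2)^{-\alpha}$ uniformly in $x$. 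Second, $\eta_{Y,f'}=\eta+W$ with $W\triangleq f(X)-f'(X)$; as $W$ is a function of $X$ it is independent of $\eta$ and bounded by $M_W\triangleq\norm{f}_\infty+T_0$, so $\eta_{Y,f'}$ has density $p_{\eta_{Y,f'}}(t)=\int p_\eta(t-w)\,dP_W(w)$, and for $\abs{t}>\max\braces{2M_W,2T}$ every $w$ in the support of $P_W$ gives $\abs{t-w}\ge\abs{t}/2>T$, whence $p_{\eta_{Y,f'}}(t)\le C(\abs{t}/2)^{-\alpha}$. Choosing $T'$ to be the largest of these thresholds and $C'$ the largest of the constants (note $C'$ does not depend on $T_0$) proves (ii). The final claim follows because $g(y)=\expec[X\mid Y=y]\in[-B,B]$ a.s.\ when $X\in[-B,B]$ a.s., so $g$ satisfies the hypothesis with $T_0=B$.

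The estimates themselves are short; the only places needing care are (a) deducing boundedness of $p_X$ and $p_\eta$ from the mere ``bounded derivative'' hypothesis, which is what makes part (i) work, and (b) justifying the density representations used in (ii) --- the convolution formula for $p_{\eta_{Y,f'}}$ (a bounded perturbation of $\eta$, independent of it) and, more delicately, the absolute continuity of $\eta_{X,g}$ even though $g(Y)$ need not be absolutely continuous, which one obtains by disintegrating over $Y$ and using that $X\mid Y=y$ has density $p_{X,Y}(\cdot,y)/p_Y(y)$; boundedness of the residual then kills its tail.
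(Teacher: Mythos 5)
Your proof is correct and follows essentially the same route as the paper's: factor the joint as $p_{X,Y}(x,y)=p_X(x)\,p_\eta(y-f(x))$, differentiate the product for (i), and for (ii) use the bounded support of $P_X$, the boundedness of $f$, $f'$, $g$, and the polynomial tail of $p_\eta$ after writing the residual densities as integrals of the joint (your convolution with the independent bounded perturbation $W=f(X)-f'(X)$ is the same computation as the paper's integral over $x$). The only added content is your explicit deduction that a density with bounded derivative is bounded, which the paper simply asserts.
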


The next lemma relates the density of residuals to the $L_2$ distance between functions. Notice, as discussed in
Section \ref{sec:inference}, that the Lemma suggests that the densities of residuals converge faster the sharper the
tails of the noise $\eta$: the larger $\alpha$, the sharper the bounds are in terms of the $L_2$ distance between
functions.

\begin{lemma}[Density of residuals w.r.t. $L_2$ distance]
\label{lem:propOfResiduals1}
Suppose the joint distribution $P_{X, Y}$ satisfies Assumption \ref{ass:continuityofP} for
some $f, \eta$ and $\alpha>1$. Let $g(y)\triangleq \Expectation[X| Y=y]$.
Consider functions $f', g':\real\mapsto\real$. There exist a constant $C''$ such that
for $\norm{f-f'}_{2, P_X}$ and (respectively) $\norm{g-g'}_{2, P_Y}$ sufficiently small, we have
\begin{align*}
 \sup_{t\in \real}\abs{p_{\eta_{Y, f'}}(t) -p_{\eta_{Y, f}}(t)}&\leq C''\paren{\norm{f'-f}_{2, P_X}}^{(\alpha -1)/2\alpha},\\
 \text{ and} \\
 \sup_{t\in \real}\abs{p_{\eta_{X, g'}}(t) -p_{\eta_{X, g}}(t)}&\leq C''\paren{\norm{g'-g}_{2, P_Y}}^{(\alpha -1)/2\alpha}.
\end{align*}
\end{lemma}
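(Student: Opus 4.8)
The plan is to bound the sup-distance between residual densities by splitting the real line into a ``core'' region $|t|\le T_n$, where I can use a convolution/smoothing argument, and a ``tail'' region $|t|>T_n$, where both densities are controlled by the polynomial tail bound of Lemma \ref{lem:propInduced}(ii); the cutoff $T_n$ will be chosen as a suitable power of $\|f'-f\|_{2,P_X}$ to balance the two contributions, which is exactly where the exponent $(\alpha-1)/2\alpha$ comes from. I will do the argument for $f,f'$; the case of $g,g'$ is symmetric (note $g$ is assumed to be the Bayes regressor only so that Lemma \ref{lem:propInduced}(ii) applies to $p_{\eta_{X,g}}$, giving its tail decay).

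First I would write the density of a residual explicitly. Since $\eta_{Y,f'}=Y-f'(X)=f(X)-f'(X)+\eta$ and $\eta\ci X$, conditioning on $X$ gives
\begin{align*}
 p_{\eta_{Y,f'}}(t) = \int p_X(x)\, p_\eta\!\left(t - (f(x)-f'(x))\right) dx,
\end{align*}
and similarly $p_{\eta_{Y,f}}(t)=\int p_X(x)\,p_\eta(t)\,dx = p_\eta(t)$ — wait, more carefully $p_{\eta_{Y,f}}(t)=p_\eta(t)$ since $\eta_{Y,f}=\eta$. So the difference is
\begin{align*}
 p_{\eta_{Y,f'}}(t)-p_{\eta_{Y,f}}(t) = \int p_X(x)\left[p_\eta\!\left(t-(f-f')(x)\right)-p_\eta(t)\right]dx.
\end{align*}
By Assumption \ref{ass:continuityofP}, $p_\eta$ has bounded derivative, so $|p_\eta(t-s)-p_\eta(t)|\le \|p_\eta'\|_\infty |s|$, and hence pointwise the difference is at most $\|p_\eta'\|_\infty \int p_X(x)|(f-f')(x)|\,dx \le \|p_\eta'\|_\infty \|f-f'\|_{2,P_X}$ by Cauchy--Schwarz. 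That already gives a clean bound — but with exponent $1$, not $(\alpha-1)/2\alpha$. The catch must be that Assumption \ref{ass:continuityofP} is not quite being used this way, or that one cannot assume $p_\eta$ globally Lipschitz in the relevant generality; more likely the intended argument controls the density difference of \emph{general} residuals (where no such convolution identity holds because $f$ need not be the regressor in the $g$-direction), so the proof must go through the gradient bound of $p_{X,Y}$ from Lemma \ref{lem:propInduced}(i) together with a change-of-variables that only has Jacobian control off a small bad set.

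So the real plan: use Lemma \ref{lem:propInduced}(i), that $p_{X,Y}$ has bounded gradient, to write $p_{\eta_{Y,f'}}(t)=\int p_{X,Y}(x, t+f'(x))\,dx$ and compare with $p_{\eta_{Y,f}}(t)=\int p_{X,Y}(x,t+f(x))\,dx$; the integrand differs by at most $\|\nabla p_{X,Y}\|_\infty |f(x)-f'(x)|$ on the set where $x$ lies in the bounded support of $P_X$, but $p_{X,Y}$ itself need not be supported there since $Y$ is unbounded — however the tail bound of Lemma \ref{lem:propInduced}(ii) handles the contribution of large $|t|$. Concretely, fix a threshold $R$; on $\{|(f-f')(x)|\le \delta\}$ the integrand difference is $\le \|\nabla p_{X,Y}\|_\infty \delta$, while the set $\{|(f-f')(x)|>\delta\}$ has $P_X$-measure $\le \|f-f'\|_{2,P_X}^2/\delta^2$ by Chebyshev, on which the integrand is bounded by $\|p_{X,Y}\|_\infty$ (for $|t|\le T'$) or by $C'|t|^{-\alpha}$ via Lemma \ref{lem:propInduced}(ii) (for $|t|>T'$, using $\sup|f'|\le T_0$). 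Adding these, $\sup_t|p_{\eta_{Y,f'}}(t)-p_{\eta_{Y,f}}(t)| \lesssim \delta + \|f-f'\|_{2,P_X}^2/\delta^2$ is the wrong trade-off; the correct one uses the $\alpha$-tail to get a factor depending on how far out in $t$ the bad set can push mass, yielding optimization over $\delta$ of the form $\delta + \|f-f'\|_{2}^{2}\,\delta^{-(1+1/\alpha)}$ or similar, whose optimum is $\|f-f'\|_2^{(\alpha-1)/\alpha}$ up to the square root already present in passing from $L_2$ to $L_1$, i.e. $(\|f-f'\|_{2,P_X})^{(\alpha-1)/2\alpha}$.

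The main obstacle is getting this balance exactly right: one must carefully track how the polynomial tail exponent $\alpha$ enters when bounding the contribution of the ``bad'' set $\{x: |(f-f')(x)|>\delta\}$ to $p_{\eta_{Y,f'}}(t)$ for large $|t|$, and verify that ``$\|f-f'\|_{2,P_X}$ sufficiently small'' is what is needed for the chosen $\delta=\delta(\|f-f'\|_2)$ to fall in the range where Lemma \ref{lem:propInduced}(ii)'s constants $T'$, $C'$ (which depend on the uniform bound $T_0$ on $\sup|f'|$, itself controlled once $\|f-f'\|_2$ is small since $f$ is bounded) are valid. Everything else — the conditioning identity for $p_{\eta_{Y,f'}}$, Cauchy--Schwarz, Chebyshev, and the final optimization in $\delta$ — is routine.
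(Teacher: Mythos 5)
Your proposal has the right skeleton and it matches two of the paper's three terms: you represent the residual density as $\int p_{X,Y}(x,\,t+f'(x))\,dx$, split the integration variable into a good set $\{|f-f'|\le\delta\}$ (handled by the bounded gradient of $p_{X,Y}$ from Lemma \ref{lem:propInduced}) and a bad set (handled by Markov/Chebyshev), and you correctly see why the naive convolution bound $\|p_\eta'\|_\infty\|f-f'\|_{2,P_X}$ cannot be the intended route, since it has no analogue in the anticausal direction. But there is a genuine gap exactly where the exponent $(\alpha-1)/2\alpha$ has to come from. In the paper, the polynomial tail is \emph{not} used in $t$ and not on the bad set: it is used to truncate the \emph{integration variable} to a window $\D=[-\tau,\tau]$, paying $O(\tau^{-(\alpha-1)})$ for the outside of the window via Lemma \ref{lem:propInduced}, while the good-set gradient bound necessarily carries the factor $2\tau$ (length of the window) and the bad set contributes $P_X(\real_>)\le\sqrt{\|f-f'\|_{2,P_X}}$; balancing $\tau^{-(\alpha-1)}$ against $\tau\sqrt{\|f-f'\|_{2,P_X}}$ with $\tau\sim\|f-f'\|_{2,P_X}^{-1/2\alpha}$ gives the stated exponent. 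Your plan instead truncates in $t$ and, for the core region, implicitly integrates the constant bound $\|\nabla p_{X,Y}\|_\infty\delta$ over the support of $P_X$. That works for the $f$-direction only because $P_X$ has bounded support, so your claim that ``the case of $g,g'$ is symmetric'' is where the argument breaks: in the $g$-direction the integration variable is $y$, $P_Y$ has unbounded support, and without the window truncation the good-set term is not even finite; a cutoff in $t$ does nothing to fix this (your good/bad bounds are uniform in $t$, so the $t$-split buys nothing). This is precisely the step you defer as ``getting this balance exactly right,'' and the trade-off you write down ($\delta+\|f-f'\|_2^2\delta^{-(1+1/\alpha)}$ with claimed optimum $\|f-f'\|_2^{(\alpha-1)/\alpha}$) is internally inconsistent and does not reproduce the lemma's exponent.

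Two smaller points. On the bad set you cannot pair a $P_X$-measure bound with the sup-bound $\|p_{X,Y}\|_\infty$ on the integrand; you need either integrand $\le p_X(x)\|p_\eta\|_\infty$ together with the $P_X$-measure (as the paper effectively does), or a Lebesgue-measure bound on the bad set, which Chebyshev with respect to $P_X$ does not give. Also, the paper's tail term only needs the decay of $p_{X,Y}$ in the integration variable (uniformly in the other argument), so it does not require $f'$ bounded, whereas your $t$-tail step via Lemma \ref{lem:propInduced}(ii) does.
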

\begin{proof}
We start by bounding the difference between $p_{\eta_{Y, f'}}(t)$ and $p_{\eta_{Y, f'}}(t)$.
We note that the same ideas can be used to bound the difference
between $p_{\eta_{X, g'}}(t)$ and $p_{\eta_{X, g}}(t)$, since $X$ and $Y$ are interchangeable
in the analysis from this point on. This is because what follows does
not depend on how $P_{X, Y}$ is generated, just on the properties of the induced distributions
as stated in Lemma \ref{lem:propInduced}.

We will partition the space $\real$ as follows. First, let
$\real_>$ denote the set 
$$\braces{x: \abs{f(x) - f'(x)}>\sqrt{\norm{f-f'}_{2, P_X}}}.$$
We define the following interval $\D\subset \real$: let $T'$ be defined as in Lemma \ref{lem:propInduced},
and $\tau>T'$; we have
$\D\triangleq [-\tau, \tau]$.
%

For any $t\in \real$ we have by writing residual densities in terms of the joint
$p_{X, Y}$
(as in the proof of Lemma \ref{lem:propInduced} in supplementary appendix) that
$\abs{p_{\eta_{Y, f'}}(t) -p_{\eta	}(t)}$
\begin{align}
  &= \abs{\int_\real \paren{p_{X, Y}(x, t+f'(x)) - p_{X,Y}(x,t+f(x))} \, dx}\nonumber\\
 &\leq \abs{\int_{\real\setminus D}\paren{p_{X, Y}(x, t+f'(x)) - p_{X,Y}(x,t+f(x))} \, dx} \label{eq:decomp_0}\\
 &\, + \int_{\D\setminus \real_>} \abs{p_{X, Y}(x, t+f'(x)) - p_{X,Y}(x,t+f(x))} \, dx \label{eq:decomp_p}\\
 &\, + \abs{\int_{\real_>} \paren{p_{X, Y}(x, t+f'(x)) - p_{X,Y}(x,t+f(x))} \, dx}.\label{eq:decomp_p2}
\end{align}

To bound the first term (\ref{eq:decomp_0}), let $y_x$ denote either of $t+ f'(x)$ or $t+f(x)$, we have by
Lemma \ref{lem:propInduced} that
\begin{align*}
 \int_{\tau}^\infty p_{X, Y}(x, y_x)\, dx \leq \int_{\tau}^\infty C'x^{-\alpha}\, dx\leq \frac{C'}{\alpha - 1} \tau^{-(\alpha -1)},
\end{align*}
so that the first term (\ref{eq:decomp_0}) is at most $2\frac{C'}{\alpha - 1} \tau^{-(\alpha -1)}$.

%
To bound the second term (\ref{eq:decomp_p}) we recall that $p_{X, Y}$ has a bounded gradient on $\real^2$
(Lemma \ref{lem:propInduced}). Therefore there exists $C_0$ such that
for every $x, y, \epsilon \in \real$,
$p_{X, Y}(x, y+ \epsilon)$ differs from $p_{X, Y}(x, y)$ by at most $C_0\cdot\abs{\epsilon}$.
It follows that the second term (\ref{eq:decomp_p}) is at most
\begin{align*}
\int_{\D\setminus \real_>} C_0\abs{f'(x) - f(x)} \, dx
\leq 2\tau \cdot C_0{\sqrt{\norm{f-f'}_{2, P_X}}}.
\end{align*}

The third term (\ref{eq:decomp_p2}) is equal to
\begin{align*}
&| \pr{X\in \real_>, Y=t+f'(X)} - \\
& \quad \pr{X\in \real_>, Y=t+f(X)} |
\leq P_X(\real_>).
\end{align*}

We next bound
$P_X\paren{\real_>}$ while noting that ${\norm{f-f'}_{2, P_X}}$ could be $0$. Let
$\epsilon>{\norm{f-f'}_{2, P_X}}$. By Markov's inequality,
\begin{align*}
 P_X\braces{\abs{f(X) - f'(X)}>\sqrt{\epsilon}}&\leq \frac{\norm{f-f'}_{1, P_X}}{\sqrt{\epsilon}}\\
 &\leq \frac{\norm{f-f'}_{2, P_X}}{\sqrt{\epsilon}}.
\end{align*}
Thus, consider a sequence of $\epsilon\to\norm{f-f'}_{2, P_X}$, by Fatou's lemma we
have $P_X\paren{\real_>}\leq \sqrt{\norm{f-f'}_{2, P_X}}$.

Combining the above analysis we have that
\begin{align*}
 \abs{p_{\eta_{Y, f'}}(t) -p_{\eta}(t)}\leq& 2\frac{C'}{\alpha - 1} \tau^{-(\alpha -1)} \\
 &+ (1+2\tau \cdot C_0){\sqrt{\norm{f-f'}_{2, P_X}}}.
\end{align*}
Now, for $\norm{f-f'}_{2, P_X}$ sufficiently small, we can pick $\tau = O\paren{\norm{f-f'}_{2, P_X}}^{-1/2\alpha}$ to get the result.

As previously noted we can use the same ideas as above to
similarly bound $\abs{p_{\eta_{X, g'}}(t) -p_{\eta_{X, g}}(t)}$ for all $t\in \real$.
It suffices to interchange $X$ and $Y$ in the above analysis.
\end{proof}

\begin{lemma}
\label{lem:entropy}
Let $p_1, p_2$ be two densities such that there exist $T, C>1$ and $\alpha>1$, for all
$\abs{t}>T$, $\max_{i\in [2]}p_i(t)<C\abs{t}^{-\alpha}$. Suppose
$\sup_{t\in \real}\abs{p_1(t) - p_2(t)}<\epsilon$ for some $\epsilon<\min\braces{1/T^2, 1/(3 e)}$ satisfying the
further condition: $\forall t>1/\sqrt{\epsilon}, \, t^{(\alpha-1)/2}>\ln t$.
We then have for $\epsilon$ sufficiently small
\begin{align*}
 \abs{H(p_1)-H(p_2)}\leq 18\sqrt{\epsilon}\ln(1/3\epsilon) + \frac{4C\alpha}{\alpha - 1}\epsilon^{(\alpha - 1)/4}.
\end{align*}
\end{lemma}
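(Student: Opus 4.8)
The plan is to write $H(p_i)=\int_\real \phi(p_i(t))\,dt$ where $\phi(x)\triangleq -x\ln x$ (with $\phi(0)=0$), so that $\abs{H(p_1)-H(p_2)}\le \int_\real \abs{\phi(p_1(t))-\phi(p_2(t))}\,dt$, and then to estimate the integrand separately on the \emph{bulk} $\braces{t:\abs{t}\le 1/\sqrt{\epsilon}}$ and on the \emph{tail} $\braces{t:\abs{t}>1/\sqrt{\epsilon}}$. Since $\epsilon<1/T^2$ we have $1/\sqrt{\epsilon}>T$, so on the tail both densities obey $p_i(t)\le C\abs{t}^{-\alpha}$, which is what makes that region controllable.

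On the bulk I would use the uniform bound $\sup_t\abs{p_1(t)-p_2(t)}<\epsilon$ together with a near-modulus-of-continuity estimate for $\phi$: for $a,b\ge 0$ with $\delta\triangleq\abs{a-b}$ small one has $\abs{\phi(a)-\phi(b)}\le c\,\delta\ln(1/\delta)+\delta(a+b)$ for an absolute constant $c$. Indeed, when $\min\braces{a,b}<\delta$ both of $a,b$ lie in $[0,2\delta]$, where $\phi$ is increasing, so $\abs{\phi(a)-\phi(b)}\le \phi(2\delta)\le 2\delta\ln(1/\delta)$; otherwise the mean value theorem gives $\abs{\phi(a)-\phi(b)}\le\delta\,\abs{1+\ln\xi}$ for some $\xi$ between $a$ and $b$, and there $\abs{1+\ln\xi}\le 1+\ln(1/\delta)+\xi\le 1+\ln(1/\delta)+a+b$. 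Applying this pointwise with $\delta=\abs{p_1(t)-p_2(t)}\le\epsilon$ (using that $s\mapsto s\ln(1/s)$ is increasing on $(0,1/e)$ and $\epsilon<1/(3e)$) and integrating over the bulk --- which has length $2/\sqrt{\epsilon}$, while $\int(p_1+p_2)\le 2$ --- gives a bound of the form $O\paren{\sqrt{\epsilon}\ln(1/\epsilon)}+O(\epsilon)$; using $\ln(1/\epsilon)\le 2\ln(1/3\epsilon)$ for small $\epsilon$ and tracking constants, this is at most $18\sqrt{\epsilon}\ln(1/3\epsilon)$.

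On the tail, for $\epsilon$ small enough $C\abs{t}^{-\alpha}<1/e$ for all $\abs{t}>1/\sqrt{\epsilon}$, so by monotonicity of $\phi$ on $[0,1/e]$, $\abs{\phi(p_i(t))}=\phi(p_i(t))\le\phi\paren{C\abs{t}^{-\alpha}}\le C\alpha\abs{t}^{-\alpha}\ln\abs{t}$. Integration by parts gives $\int_{\abs{t}>1/\sqrt{\epsilon}}C\alpha\abs{t}^{-\alpha}\ln\abs{t}\,dt=\frac{2C\alpha}{\alpha-1}\epsilon^{(\alpha-1)/2}\paren{\tfrac{1}{2}\ln(1/\epsilon)+\tfrac{1}{\alpha-1}}$, and here the extra hypothesis --- that $t^{(\alpha-1)/2}>\ln t$ for $t>1/\sqrt{\epsilon}$ --- lets me bound $\tfrac{1}{2}\ln(1/\epsilon)$, and (for $\epsilon$ small) the whole parenthetical, by $\epsilon^{-(\alpha-1)/4}$; this collapses the tail contribution of each $p_i$ to $\frac{2C\alpha}{\alpha-1}\epsilon^{(\alpha-1)/4}$. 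Summing the two bulk and the two tail contributions yields $\abs{H(p_1)-H(p_2)}\le 18\sqrt{\epsilon}\ln(1/3\epsilon)+\frac{4C\alpha}{\alpha-1}\epsilon^{(\alpha-1)/4}$.

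The main obstacle is the bulk estimate, because $\phi(x)=-x\ln x$ is not uniformly continuous on $[0,\infty)$: it degenerates near $x=0$ --- this is the source of the unavoidable $\ln(1/\epsilon)$ factor --- and has unbounded derivative for large $x$. The first difficulty is harmless once the line has effectively been truncated to an interval of length $2/\sqrt{\epsilon}$; the second is absorbed by the $\delta(a+b)$ term above, which integrates to $O(\epsilon)$ precisely because $p_1$ and $p_2$ are probability densities. Once these points are in place, pinning down the explicit constants (the $18$, the $3\epsilon$ inside the logarithm, and how small $\epsilon$ must be for the truncation level and the monotonicity thresholds to hold) is routine bookkeeping.
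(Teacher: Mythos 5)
Your proof is correct, and it shares the paper's skeleton: cut $\real$ at $\tau=1/\sqrt{\epsilon}$ (legitimate since $\epsilon<1/T^2$ puts the cut beyond $T$), control the tail by the polynomial decay plus the hypothesis $t^{(\alpha-1)/2}>\ln t$, and exploit the sup-norm closeness in the bulk. The difference is how the bulk is handled. The paper splits $[-\tau,\tau]$ further into $\braces{p_2>2\epsilon}$ and its complement, compares $\int p_1\ln(1/p_1)$ on the first set to $H(p_2)$ by writing $p_2-\epsilon\le p_1\le p_2+\epsilon$ and expanding $\ln\frac{1}{1-\epsilon/p_2}\le\ln(1+2\epsilon/p_2)\le 2\epsilon/p_2$, uses monotonicity of $\gamma(u)=-u\ln u$ with $p_1\le 3\epsilon$ on the second set, and thus gets the one-sided bound $H(p_1)\le H(p_2)+\cdots$, finishing by interchanging $p_1$ and $p_2$. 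You instead prove the pointwise estimate $\abs{\phi(a)-\phi(b)}\le 2\delta\ln(1/\delta)+\delta(a+b)$ (small-values case by monotonicity of $\phi$ on $[0,1/e]$, otherwise the mean value theorem) and integrate it over the bulk, which yields the two-sided bound directly without the interchange step; moreover the $\delta(a+b)$ term, integrating to at most $2\epsilon$ because $p_1,p_2$ are densities, transparently covers regions where the densities exceed $1$, a regime the paper's argument treats only implicitly. A minor bonus of your route: it shows $\int\abs{\phi(p_1)-\phi(p_2)}<\infty$, so if one of the entropies is well defined and finite, so is the other. Both arguments use the same hypotheses and tail computation and land on the stated constants -- yours with room to spare on the $18\sqrt{\epsilon}\ln(1/3\epsilon)$ term and exactly $\frac{4C\alpha}{\alpha-1}\epsilon^{(\alpha-1)/4}$ on the tail.
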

\begin{proof}
For simplicity of notation in what follows, let $\tau\triangleq1/\sqrt{\epsilon}$.
Let $\D\triangleq [-\tau, \tau]$ and let $\D_{2>}\triangleq \braces{t\in \D, p_2(t)>2\epsilon}$. Define $\gamma(u) = -u\ln u$ for $u>0$, and $\gamma(0) = 0$.
We will use the fact that for the function $\gamma(\cdot)$ is increasing on $[0, 1/e]$. We have 
\begin{align}
H(p_1)
=& \int_{\real\setminus\D}\gamma(p_1(t)) \, dt  +
\int_{\D_{2>}}\gamma(p_1(t))\, dt \nonumber\\
&+ \int_{\D\setminus \D_{2>}}\gamma(p_1(t))\, dt \nonumber\\
\leq& \int_{\real\setminus\D}\gamma(p_1(t)) \, dt + \int_{\D_{2>}}p_1(t)\ln \frac{1}{p_1(t)}\, dt  \nonumber\\
&+ \Sigma\paren{\D\setminus \D_{2>}} \cdot\gamma(3\epsilon), \label{eq:Hp1_decomp}
 \end{align}
since for $t\in \D\setminus \D_{2>}$ we have $$p_1(t) \leq p_2(t) + \epsilon\leq 3\epsilon\leq1/e.$$

To bound the first term of (\ref{eq:Hp1_decomp}), notice that
\begin{align*}
 \int_{\tau}^\infty \gamma(p_1(t))\, dt &\leq \int_{\tau}^\infty -Ct^{-\alpha}\ln\paren{Ct^{-\alpha}}\, dt\\
 &\leq  \int_{\tau}^\infty C\alpha t^{-\alpha}\ln t \, dt\\
 &\leq \int_{\tau}^\infty C\alpha t^{-(\alpha + 1)/2}\, dt \\
 &\leq \frac{2C\alpha}{\alpha - 1} \tau^{-(\alpha -1)/2},
\end{align*}
hence we have  $$\int_{\real\setminus\D}\gamma(p_1(t)) \, dt\leq C'\tau^{-\alpha'} \text{ for } C', \alpha'>0.$$

Next we bound the
second term of (\ref{eq:Hp1_decomp}) as follows:
\begin{align*}
 \int_{\D_{2>}}& p_1(t)\ln \frac{1}{p_1(t)}\, dt \\
 &\leq \int_{\D_{2>}}(p_2(t) + \epsilon)\ln \frac{1}{p_2(t) - \epsilon}\, dt\\
 &=\int_{\D_{2>}}p_2(t)\ln\frac{1}{p_2(t)(1-\epsilon/p_2(t))}\, dt \\
 &+ \int_{\D_{2>}}\epsilon\ln \frac{1}{p_2(t) - \epsilon}\, dt\\
&\leq H(p_2) +  \int_{\D_{2>}}p_2(t)\ln\frac{1}{1-\epsilon/p_2(t)}\, dt \\
&+ \int_{\D_{2>}}\epsilon\ln\frac{1}{\epsilon}\, dt\\
&\leq H(p_2) + \int_{\D_{2>}}p_2(t)\ln(1+2\epsilon/p_2(t))\, dt \\
&+ \Sigma\paren{\D_{2>}}\cdot\gamma(\epsilon) \\
&\leq H(p_2) + 2\Sigma\paren{\D_{2>}}\cdot \epsilon + \Sigma\paren{\D_{2>}}\cdot\gamma(\epsilon).
\end{align*}
Combining all the above, we have
\begin{align*}
 H(p_1) \leq& H(p_2) + 3\Sigma\paren{\D}\cdot \gamma(3\epsilon) + C'\tau^{-\alpha'}\\
 =& H(p_2) + 18\sqrt{\epsilon}\ln(1/3\epsilon) + C'\epsilon^{\alpha'/2}.
\end{align*}

Notice that $p_1$ and $p_2$ are interchangeable in the above argument. The result therefore follows.
\end{proof}


We are now ready to prove the main theorem.

\subsection*{Proof of Theorem \ref{theo:relating_Rp_H}}
 Let $f(x)\triangleq\expec[Y|x]$ and $g(y)\triangleq\expec[X|y]$.
  By Lemma \ref{lem:complexity},
  \begin{align}
   H(X) + H\paren{\eta_{Y, f}} > H(Y) + H\paren{\eta_{X, g}} + 8\epsilon,
  \end{align}
  for some $\epsilon>0$.

Thus we detect the right direction $X\to Y$ if all quantities
(a) $\abs{H_n\paren{\eta_{Y, f_n}} - H\paren{\eta_{Y, f}}}$,
(b) $\abs{H_n\paren{\eta_{X, g_n}} - H\paren{\eta_{Y, g}}}$,
(c) $\abs{H_n(X) - H(X)}$, and (d) $\abs{H_n(Y) - H(Y)}$, are at most $\epsilon$.

By assumption, (c) and (d) both tend to $0$ in probability.
The quantities (a) and (b) are handled as follows. We only show the argument for (a),
as the argument for (b) is the same. We have:
\begin{align*}
\abs{H_n\paren{\eta_{Y, f_n}} - H\paren{\eta_{Y, f}}} & \leq
\abs{H_n\paren{\eta_{Y, f_n}} - H\paren{\eta_{Y, f_n}}} \\
& \quad + \abs{H\paren{\eta_{Y, f_n}} - H\paren{\eta_{Y, f}}}.
\end{align*}
Now $H_n\paren{\eta_{Y, f_n}}$ is consistent for $f_n$ fixed (it easy to check that
$P_{\eta_{Y, f_n}}$ satisfies the necessary conditions provided $f_n$ is bounded)
and $f_n$ is learned on an independent sample from $H_n$, we have
$\abs{ H_n\paren{\eta_{Y, f_n}} - H\paren{\eta_{Y, f_n}}} \xrightarrow{P} 0$.

By Lemma \ref{lem:propOfResiduals1}, convergence of $f_n$ i.e. $\norm{f_n -f}_{2, P_X}\xrightarrow{P}0$
implies $\sup_{t}\abs{p_{\eta_{Y, f_n}}(t) - p_{\eta_{Y, f}}(t)}\xrightarrow{P} 0$; this in turn implies
by Lemma \ref{lem:entropy} that $\abs{H\paren{\eta_{Y, f_n}} - H\paren{\eta_{Y, f}}}\xrightarrow{P} 0$.

Thus all quantities (a)-(d) are at most $\epsilon$ with probability going to $1$.
\qed


\subsection{Coupled Regression and Residual-entropy Estimation}
\label{sec:coupled}
Here we consider a coupled version of the meta-algorithm where
$f_n$ and $g_n$ are kernel regressors. This is described in the next subsection.
\subsubsection{Kernel instantiation of the meta-algorithm}
\label{sec:kerneldefs}
{\bf Regression:} Although any kernel that is $0$ outside a bounded region will work for the regression,
we focus here (for simplicity) on the particular case where $f_n$ and $g_n$ are
box-kernel regressors defined as follows (interchange $X$ and $Y$ to obtain $g_n(y)$):
\begin{align}
&f_n(x)  = \frac{1}{n_{x, h}}\sum_{i=1}^n Y_i \ind{\abs{X_i - x} < h}, \label{eq:boxkernel}\\
&\text{where } n_{x, h} = \abs{i: \abs{X_i - x} < h}, \text{ for a bandwidth } h. \nonumber
\end{align}

{\bf Entropy estimation:} Given a sequence $\epsilon = \braces{\epsilon_i}_{i =1}^n$, and
a bandwidth $\sigma$, define $p_{n, \epsilon}$
as follows:
\begin{align*}
 p_{n, \epsilon}(t) =& \frac{1}{nh}\sum_{i=1}^n K\paren{\frac{{\epsilon_i - t}}{\sigma}},\\
 &\text{ where } \int_{\real}K(u)\, du = 1,\, \abs{\frac{d}{d u} K(u)}< \infty,\\
&\text{ and } K(u)=0 \text{ for } \abs{u}\geq 1.
\end{align*}

Let $\epsilon_{Y, i} = Y_i - f_n(X_i)$ and $\epsilon_{X, i} = X_i - g_n(Y_i)$.
The residual entropy estimators are defined as:
\begin{align}
 H_n\paren{\eta_{Y, f_n}} \triangleq H\paren{p_{n, \epsilon_{Y}}} \text{ and }
 H_n\paren{\eta_{X, g_n}} \triangleq H\paren{p_{n, \epsilon_{X}}}. \label{eq:entropy}
\end{align}

\subsubsection{Consistency result for coupled-estimation}
We abuse notation and use $h$ and $\sigma$ to denote the bandwidth parameters used to estimate
either $f_n$ and $H_n\paren{\eta_{Y, f_n}}$, or $g_n$ and $H_n\paren{\eta_{X, g_n}}$. We make the distinction
clear whenever needed.

The consistency result depends on the following quantities bounded in Lemma \ref{lem:Rn}.

\begin{definition}[Expected average excess risk]
 Define $R_{n}(f_n)\triangleq {\expec\frac{1}{n}\sum_{i=1}^n \abs{f_n(X_i) - f(X_i)}}$ and similarly
$R_n(g_n)\triangleq {\expec\frac{1}{n}\sum_{i=1}^n \abs{g_n(Y_i) - g(Y_i)}}$.
\end{definition}

We assume in this section that the noise $\eta$ has exponentially decreasing tail:
\begin{definition}
A r.v. $Z$ has {\bf exponentially decreasing tail} if there exists $C, C'>0$ such that
for all $t>0, \quad \pr{\abs{Z-\expec Z}>t}\leq Ce^{-C't}$.
\end{definition}

The following consistency theorem hinges on properly choosing the bandwidths parameters $h$ and $\sigma$.
Essentially we want to choose $h$ such that regression estimation is consistent, and we want to choose
$\sigma$ so as not to overfit regression error. If the bandwidth $\sigma$ is too small relative to regression
error (captured by $R_n$), then the entropy estimator (for the residual entropy) is only fitting this error.
The conditions on $\sigma$ in the Theorem are mainly to ensure that
$\sigma$ is not too small relative to regression error $R_n$.

\begin{theorem}[Coupled estimation]
\label{theo:coupled}
Suppose $X\xrightarrow{f, \eta} Y$ for some $f, \eta$,
and suppose $P_{X, Y}$ satisfies Assumption \ref{ass:continuityofP}, and $\eta$ has exponentially decreasing tail.
Let $f_n$, $g_n$, and $H_n$ be defined as in Section \ref{sec:kerneldefs},
and let both $H_n(X)$ and $H_n(Y)$ be consistent as in Assumption \ref{ass:consistency}.

Suppose that : \\
({\rm{i}}) For learning $f_n$ and $H_n\paren{\eta_{Y, f_n}}$, we use $h=c_1n^{-\alpha}$ for some
$c_1>0$ and $0<\alpha<1$, and $\sigma=c_2 n^{-\beta}$ for some $c_2>0$ and
$0<\beta< \min\braces{(1-\alpha)/4, \alpha/2}$.\\
({\rm{ii}}) For learning $g_n$ and $H_n\paren{\eta_{X, g_n}}$,
$h$ satisfies $h\to0$ and $nh\to\infty$, and $\sigma$ satisfies $\sigma\to0$,
$n\sigma\to\infty$, and $\sigma=\Omega(R_n(g_n)^{-\gamma})$
for some $0<\gamma<1/2$.

Then the probability of correctly detecting $X\to Y$ goes to $1$ as $n\to \infty$.
\end{theorem}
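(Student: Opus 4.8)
The plan is to mirror the proof of Theorem~\ref{theo:relating_Rp_H}. By Lemma~\ref{lem:complexity} together with the identifiability built into $X\xrightarrow{f,\eta}Y$ (Assumption~\ref{ass:continuityofP} meets the conditions of \citet{zhang2009identifiability}), there is some $\epsilon>0$ with
$H(X)+H(\eta_{Y,f}) > H(Y)+H(\eta_{X,g}) + 8\epsilon$,
where $f(x)=\expec[Y|x]$ and $g(y)=\expec[X|y]$. It then suffices to show that each of the four errors $|H_n(X)-H(X)|$, $|H_n(Y)-H(Y)|$, $|H_n(\eta_{Y,f_n})-H(\eta_{Y,f})|$ and $|H_n(\eta_{X,g_n})-H(\eta_{X,g})|$ is at most $\epsilon$ with probability tending to $1$. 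The first two follow from the assumed consistency of $H_n(X),H_n(Y)$, since $P_X$ (bounded support) and $P_Y$ (polynomial tail, by Lemma~\ref{lem:propInduced}) satisfy the hypotheses of Assumption~\ref{ass:consistency}. The crux is the two residual terms: unlike the decoupled case, we cannot split off ``$H_n$ consistent for a frozen $f_n$'', because $H_n(\eta_{Y,f_n})=H(p_{n,\epsilon_Y})$ is built from the very sample used to fit $f_n$.

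For the causal direction I would introduce the \emph{oracle} kernel density estimate $p^\star_n(t)=\frac{1}{n\sigma}\sum_{i=1}^n K\!\left(\frac{\eta_i-t}{\sigma}\right)$ formed from the true noise residuals $\eta_i=Y_i-f(X_i)$, and bound
\[
|H(p_{n,\epsilon_Y})-H(\eta_{Y,f})| \le |H(p_{n,\epsilon_Y})-H(p^\star_n)| + |H(p^\star_n)-H(p_\eta)|.
\]
The second term tends to $0$ in probability because $p^\star_n$ is an ordinary kernel density estimate from i.i.d.\ draws of $p_\eta$, and $p_\eta$ has bounded variance, bounded derivative and (exponential, hence) polynomial tail, so its resubstitution entropy is a consistent estimate of $H(p_\eta)$ under $\sigma\to0$, $n\sigma\to\infty$ \cite{beirlant1997nonparametric}. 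For the first term, since $K$ has bounded derivative, replacing each $\eta_i$ by $\epsilon_{Y,i}=Y_i-f_n(X_i)$ moves the density by at most $\frac{\|K'\|_\infty}{\sigma^2}\cdot\frac1n\sum_i|f_n(X_i)-f(X_i)|$ uniformly in $t$; by Lemma~\ref{lem:Rn} and Markov's inequality this empirical average is, with high probability, of order $R_n(f_n)=O\!\big(h+(nh)^{-1/2}\big)=O\!\big(n^{-\alpha}+n^{-(1-\alpha)/2}\big)$, which is $o(\sigma^2)=o(n^{-2\beta})$ exactly because $\beta<\min\{\alpha/2,(1-\alpha)/4\}$. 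To turn this sup-norm closeness into entropy closeness I would rerun the argument of Lemma~\ref{lem:entropy}: both $p_{n,\epsilon_Y}$ and $p^\star_n$ are supported, with high probability, in a common interval of length $O(\max_i|\eta_i|+\sigma)=O(\log n)$ — this is where the exponentially decreasing tail of $\eta$ enters — their values are $O(1/\sigma)$, and $\gamma(u)=-u\ln u$ is handled near $0$ (both densities tiny there) and away from $0$ (where $\gamma$ is $O(\log n)$-Lipschitz) separately, giving a bound of order $R_n(f_n)/\sigma^2$ up to logarithmic factors, hence $o_P(1)$.

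The anticausal direction uses the same skeleton with $g,g_n,R_n(g_n)$ in place of $f,f_n,R_n(f_n)$: $p_{\eta_{X,g}}$ has polynomial tail by Lemma~\ref{lem:propInduced} (in fact $\eta_{X,g}=X-\expec[X|Y]$ is bounded, since $P_X$ has bounded support, so the support issue is trivial and no $\log n$ factor appears), $R_n(g_n)\to0$ follows from $L_2$-consistency of the box-kernel $g_n$ (Assumption~\ref{ass:consistency}, valid as $h\to0$, $nh\to\infty$) together with Lemma~\ref{lem:Rn}, and the condition $\sigma=\Omega\!\big(R_n(g_n)^{\gamma}\big)$ with $\gamma<1/2$ is precisely what forces $R_n(g_n)/\sigma^2=O\!\big(R_n(g_n)^{1-2\gamma}\big)\to0$ without any a priori rate for $g_n$ (recall $g$ need not even be continuous). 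Combining the four bounds gives the theorem.

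I expect the main obstacle to be the passage from ``$\sup_t|p_{n,\epsilon_Y}-p^\star_n|$ small'' to ``$|H(p_{n,\epsilon_Y})-H(p^\star_n)|$ small'' for these data-dependent densities: differential entropy is not Lipschitz in sup norm (the $-u\ln u$ singularity at $0$, and the unbounded domain), so one must simultaneously (i) establish a high-probability bounded common support, which is where the exponential-tail assumption on $\eta$ is used — and explains why it is stronger than the polynomial tail that sufficed for decoupled estimation — (ii) control the density magnitudes on that support, and (iii) ensure $R_n(\cdot)/\sigma^2$ dominates even the extra $\log n$ factors, which is exactly what the bandwidth conditions (i)--(ii) of the theorem are calibrated to deliver. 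A secondary subtlety is that, lacking a convergence rate for $g_n$, the requirement $R_n(g_n)/\sigma^2\to0$ cannot be verified a priori, which is why the theorem imposes the relation between $\sigma$ and $R_n(g_n)$ as a hypothesis rather than a consequence.
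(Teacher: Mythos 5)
Your proposal is correct and follows essentially the same route as the paper's proof: compare $p_{n,\epsilon_Y}$ (resp.\ $p_{n,\epsilon_X}$) in sup norm to the oracle KDE built from the true residuals via the bounded derivative of $K$, control this by $R_n/\sigma^2$ using Lemma~\ref{lem:Rn} and the bandwidth conditions, invoke the entropy-continuity argument of Lemma~\ref{lem:entropy} with the exponential tail providing the $O(\log n)$ effective support in the causal direction (bounded support of $P_X$ handling the anticausal one), appeal to consistency of the oracle resubstitution entropy estimate, and finish as in Theorem~\ref{theo:relating_Rp_H} via Lemma~\ref{lem:complexity}. The only differences are cosmetic (you rerun Lemma~\ref{lem:entropy}'s argument rather than citing it directly, and apply Markov to the empirical average rather than to its square root), so no gap.
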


The theorem relies on Lemma \ref{lem:Rn} which bounds the errors $R_n$ for both $f_n$ and $g_n$.
Suppose $X\xrightarrow{f, \eta} Y$, then if $f$ is smooth or continuously differentiable, $R_n(f_n)\to 0$, and
in fact we can obtain finite rates of convergence for $R_n(f_n)$,
thus yielding advice on setting $\sigma$. The second part of the Lemma corresponds to this situation.

However, as mentioned earlier in the paper introduction, a smooth $f$
does not ensure that $g(y)\triangleq g(X|y)$ is smooth or even continuous,
so we do not have rates for $R_n(g_n)$. We can nonetheless show that $R_n(g_n)$ would
generally converge to $0$, which is sufficient for there to be proper settings for $\sigma$
(i.e. $\sigma$ larger than the error, but also tending to $0$).

We note that the r.v.'s $X$ and $Y$ are interchangeable in this lemma 
since it does not assume $X\to Y$. The proof is given in the supplemental appendix.

\begin{lemma}
\label{lem:Rn}
Let $f_n$ be defined as in (\ref{eq:boxkernel}). Let $f(x)\triangleq \expec[{Y|x}]$.
Suppose ({{\rm{i}}}) $\expec Y^2< \infty$ and that $f$ is bounded;  $h\to 0$ and $nh\to \infty$. Then
$\expec{\frac{1}{n}\sum_1^n \abs{f_n(X_i)-f(X_i)}}\xrightarrow{n\to 0}0.$ 

 Suppose further ({{\rm{ii}}}) that $P_X$ has bounded support and that $f$ is continuously differentiable; $h=c_1n^{-\alpha}$ for
 some $c_1>0$ and $0<\alpha<1$. 
 
 Then we have
$\expec{\frac{1}{n}\sum_1^n \abs{f_n(X_i)-f(X_i)}}\leq c_2 n^{-\beta},$ 
for $\beta\triangleq \min\braces{(1-\alpha)/2, \alpha}$.
\end{lemma}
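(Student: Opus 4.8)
The plan is to bound the error of the box-kernel regressor at the sample points through a bias--variance decomposition, handling the two parts separately and then specializing the bounds to the two regimes. Since the pairs $(X_i,Y_i)$ are i.i.d.\ (hence exchangeable) and $f_n$ is a symmetric function of the sample, $\expec\frac{1}{n}\sum_{i=1}^n\abs{f_n(X_i)-f(X_i)} = \expec\abs{f_n(X_1)-f(X_1)}$, so it suffices to control a single resubstitution error. Writing $w_j(x) = \ind{\abs{X_j-x}<h}/n_{x,h}$ for the box-kernel weights (nonnegative, summing to $1$, with $n_{X_1,h}\geq 1$ surely), I would split $f_n(X_1)-f(X_1) = V_n + B_n$, where $V_n = \sum_j w_j(X_1)\paren{Y_j-f(X_j)}$ is the stochastic part and $B_n = \sum_j w_j(X_1)\paren{f(X_j)-f(X_1)}$ the approximation part, and bound $\expec\abs{V_n}$ and $\expec\abs{B_n}$ in each regime.

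For the approximation part, $w_j(X_1)>0$ forces $\abs{X_j-X_1}<h$, so $\abs{B_n}\leq\sup_{\abs{x-X_1}<h}\abs{f(x)-f(X_1)}$ (a convex combination of such increments). In regime (ii), $f\in C^1$ on the compact support of $P_X$ makes $f$ Lipschitz, so this is $\leq Lh = O(n^{-\alpha})$ deterministically. In regime (i), where $f$ need not even be continuous, I would fix $\delta>0$, approximate $f$ in $\norm{\cdot}_{1,P_X}$ by a bounded uniformly continuous $\tilde f$, and write $\abs{B_n}\leq \sum_j w_j(X_1)\abs{(f-\tilde f)(X_j)} + \sum_j w_j(X_1)\abs{\tilde f(X_j)-\tilde f(X_1)} + \abs{\tilde f(X_1)-f(X_1)}$; the last term has expectation $<\delta$, the middle term is $<\delta$ once $h$ is small (uniform continuity, since all active $X_j$ lie within $h$ of $X_1$), and the first has expectation $\leq C\norm{f-\tilde f}_{1,P_X}<C\delta$ by the standard Stone-type covering inequality $\expec\sum_j w_j(X_1)g(X_j)\leq C\,\expec g(X_1)$ for box kernels in one dimension. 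Letting $n\to\infty$ and then $\delta\to 0$ gives $\expec\abs{B_n}\to 0$.

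For the stochastic part I would condition on $X_1,\dots,X_n$: the residuals $Y_j-f(X_j)$ are then independent with conditional mean $0$, so $\expec[V_n^2\mid X_1,\dots,X_n]\leq \paren{\sup_x\mathrm{Var}(Y\mid x)}\sum_j w_j(X_1)^2\leq v/n_{X_1,h}$, where $v<\infty$ under Assumption \ref{ass:continuityofP} (there $\mathrm{Var}(Y\mid x)=\mathrm{Var}(\eta)$; symmetrically $\mathrm{Var}(X\mid y)$ is bounded by the support diameter of $X$ in the application to $g_n$), and where in the bare setting of (i) one first truncates $Y$ at a slowly growing level. Conditionally on $X_1$, with $B(x,h)=(x-h,x+h)$ we have $n_{X_1,h}=1+\mathrm{Bin}\paren{n-1,P_X(B(X_1,h))}$, and the identity $\expec[1/(1+\mathrm{Bin}(m,p))]\leq 1/((m+1)p)$ gives $\expec[1/n_{X_1,h}]\leq \frac{1}{n}\int P_X(B(x,h))^{-1}\,dP_X(x)$. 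In regime (i), $h\to 0$ and $nh\to\infty$ force this to $0$ (Lebesgue/Besicovitch differentiation of $P_X$ plus dominated convergence), so $\expec\abs{V_n}\to 0$ and part (i) follows; alternatively part (i) is just the universal $L_1$-consistency of the moving-window estimate, which is classical (\cite{GKKW:81}). In regime (ii), covering $\mathrm{supp}(P_X)$ by $O(1/h)$ intervals of length $<h$ bounds the integral by $O(1/h)$, so $\expec V_n^2 = O(1/(nh)) = O(n^{\alpha-1})$, hence $\expec\abs{V_n}\leq\sqrt{\expec V_n^2}=O(n^{-(1-\alpha)/2})$; adding the $O(n^{-\alpha})$ bias yields $\expec\abs{f_n(X_1)-f(X_1)} = O(n^{-\beta})$ with $\beta=\min\braces{(1-\alpha)/2,\alpha}$.

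The step I expect to be the main obstacle is the stochastic term in regime (ii): the $O(1/(nh))$ bound on $\expec V_n^2$ rests on two ingredients at once --- the covering estimate $\int P_X(B(\cdot,h))^{-1}\,dP_X = O(1/h)$, which needs the bounded support, and the boundedness of $\mathrm{Var}(Y\mid x)$ --- and without both one recovers only qualitative consistency, not a polynomial rate. In regime (i) the analogous delicate point is the classical mechanism behind Stone's theorem: truncating an unbounded response and using the covering inequality to keep the reweighted approximation error under control. These pieces are standard, but they are where the real work lies.
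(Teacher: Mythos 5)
Your proposal is correct and follows essentially the same route as the paper's proof: a bias--variance decomposition at the sample points, the Stone-type covering inequality (the paper's Lemma~\ref{lem:book}) together with approximation by a bounded uniformly continuous function for part (i), and the Lipschitz bias $O(h)$ plus an $O(1/(nh))$ variance bound via an $O(1/h)$ covering of the bounded support for part (ii); the differences (reduction to a single point by exchangeability and the binomial identity, versus the paper's averaging over a near-full-measure interval $S$, and working in $L_1$ rather than first squaring) are cosmetic. One small caution: when $\mathrm{supp}(P_X)$ is unbounded the quantity $\frac{1}{n}\int P_X\paren{(x-h,x+h)}^{-1}\,dP_X(x)$ can be infinite, so in regime (i) the differentiation-plus-dominated-convergence step must be applied pointwise to $\expec\left[1/n_{X_1,h}\mid X_1=x\right]$ (dominated by the trivial bound $1$), as your parenthetical intends, or one restricts to a compact $S$ with $P_X(\real\setminus S)<\epsilon$ as the paper does.
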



We can now prove the theorem of this section.

\subsection*{Proof of Theorem \ref{theo:coupled}}
Let $\bar\epsilon_{Y, i} \triangleq Y_i - f(X_i)$ and ${\bar\epsilon}_{X, i} \triangleq X_i - g(Y_i)$.
Note that, under our conditions on $\sigma$ both $H\paren{p_{n, \bar\epsilon_Y}}$ and
$H\paren{p_{n, \bar\epsilon_X}}$ are respectively
consistent estimators of $H\paren{\eta_{Y, f}}\triangleq H(\eta)$ and $H\paren{\eta_{X, g}}$
(see e.g. \cite{beirlant1997nonparametric}).
For any two densities $p, p'$ we write $\abs{p-p'}$ to
denote $\sup_t\abs{p(t) - p'(t)}$.

Given the assumption that $K$ has bounded derivative on $\real$, there exists a constant $c_K$ such that
\begin{align*}
\abs{p_{n, \bar\epsilon_Y} - p_{n, \epsilon_{Y}}}\leq \frac{c_K}{\sigma^2}\cdot\paren{\frac{1}{n}\sum_{i=1}^n \abs{f_n(X_i) - f(X_i)}}
\end{align*}
which implies
\begin{align*}
\expec\abs{p_{n, \bar\epsilon_Y} - p_{n, \epsilon_{Y}}}^{1/2}\leq \frac{\sqrt{c_KR_n(f_n)}}{\sigma},
\end{align*}
and also
\begin{align*}
\abs{p_{n, \bar\epsilon_X} - p_{n, \epsilon_{X}}}\leq \frac{c_K}{\sigma^2}\cdot\paren{\frac{1}{n}\sum_{i=1}^n \abs{g_n(Y_i) - g(Y_i)}}
\end{align*}
which also implies
\begin{align*}
\expec\abs{p_{n, \bar\epsilon_X} - p_{n, \epsilon_{X}}}^{1/2}\leq \frac{\sqrt{c_K R_n(g_n)}}{\sigma}.
\end{align*}

Thus by Lemma \ref{lem:Rn}, we have $\expec\abs{p_{n, \bar\epsilon_X} - p_{n, \epsilon_{X}}}^{1/2}\to 0$, which in turn implies by Markov's
inequality that $\abs{p_{n, \bar\epsilon_X} - p_{n, \epsilon_{X}}}\xrightarrow{P}0$.
Now since $P_X$ has bounded support, both $p_{n, \epsilon_X}$ and $p_{n, \bar\epsilon_X}$ have bounded support, and hence by Lemma
\ref{lem:entropy} we have $\abs{H\paren{p_{\epsilon_{X, g_n}}} - H\paren{ p_{\bar\epsilon_{X, g}} }}\xrightarrow{P}0$. Hence we also have
$\abs{H_n\paren{\eta_{X, g_n}} - H(\eta_{X, g})}\xrightarrow{P}0$.

Again by Lemma \ref{lem:Rn}, we have that, for $n$ sufficiently large, 
$\expec\abs{p_{n, \bar\epsilon_Y} - p_{n, \epsilon_{Y}}}^{1/2}\leq Cn^{-\beta/2}$ for some $C>0$.
Therefore by Markov's inequality, we have
$\pr{\abs{p_{n, \bar\epsilon_Y} - p_{ n, \epsilon_{Y}} }\geq \sqrt{C}n^{-\beta/4}}\to 0$. Now, under the exponential tail assumption
on the noise, all $Y_i$ samples are contained in a region of size $C'\log n$ with probability at least $1/n$. Thus, since $K$
is supported in $[-1, 1]$, both $p_{n, \bar\epsilon_X}$ and $p_{ n, \epsilon_{Y}}$ are $0$ outside a region of size $C''\log n$.
Let $T$ be as in Lemma \ref{lem:entropy}; for all $n$ sufficiently large, $\sqrt{C}n^{-\beta/4}< 1/(C''\log n)^2 = 1/T^2$. It follows by
Lemma \ref{lem:entropy} that $\abs{H\paren{p_{\epsilon_{Y, f_n}}} - H\paren{ p_{\bar\epsilon_{Y, f}} }}\xrightarrow{P}0$, and hence
that $\abs{H_n\paren{\eta_{Y, f_n}} - H(\eta_{Y, f})}\xrightarrow{P}0$.

The rest of the proof is similar to that of Theorem \ref{theo:relating_Rp_H} by calling on Lemma \ref{lem:complexity} and using the
consistency of $H_n(X)$ and $H_n(Y)$.
\qed

\section{Final Remarks}
We derived the first consistency results for an existing family of procedures for causal inference under the Additive Noise Model.
We obtained mild algorithmic requirements, and various distributional tail conditions which guarantee consistency. 
The present work focuses on the case of two r.v.s $X$ and $Y$, which captures the inherent difficulties of consistency. 
We believe however that the insights developed should extend to the case of random vectors under corresponding tail conditions. 
The details however are left for future work.

Another interesting multivariate situation is that of a causal network of r.v.s.
as in \citet{peters2011identifiability} dicussed earlier. 
  Extending our consistency results to this particular
multivariate case would primarily consist of extending our distributional tail conditions to the tails of 
distributions resulting from conditioning on appropriate sets of variables in the network. 
This is however a non-trivial extension as it involves, e.g. for the convergence of conditional entropies, 
some additional integration steps that have to be carefully worked out.
 
A possible future direction of investigation is to understand under what conditions finite sample rates can be
obtained for such procedures. For reasons explained earlier, we do not believe that this is possible 
without less general distributional assumptions.


\bibliography{refs}
\bibliographystyle{icml2014}

\newpage
\appendix

\section{Omitted figures from Section \ref{sec:inference}}
Some addtional experimental results were omitted in the main paper for space, and
are given in Figure \ref{fig:results2}.
\begin{figure*}[t]
\begin{center}
\subfigure[]
  {\includegraphics[height=1.5in]{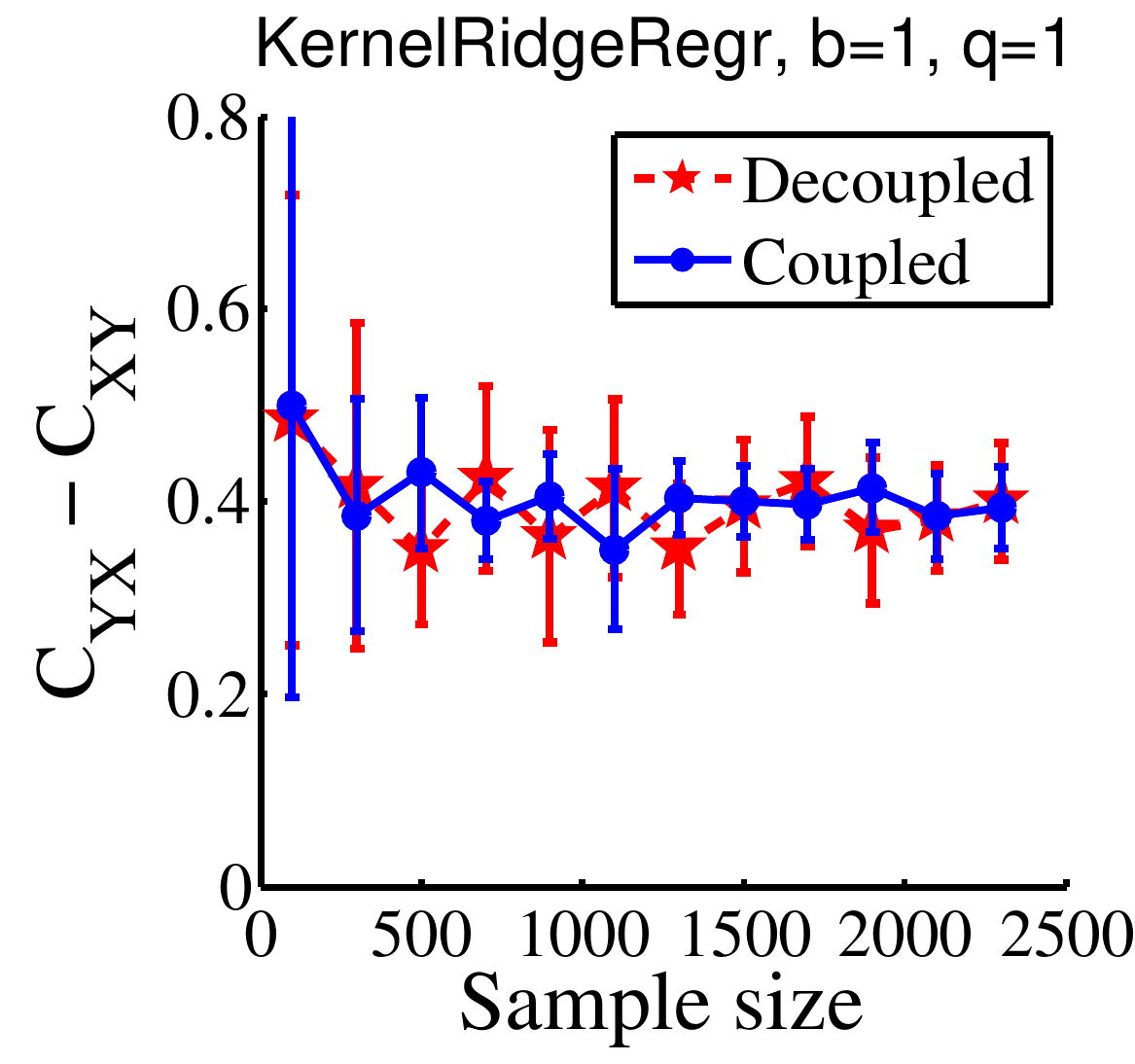}
  \label{fig:N_KerRidgeRegr}}
\subfigure[]
	{\includegraphics[height=1.5in]{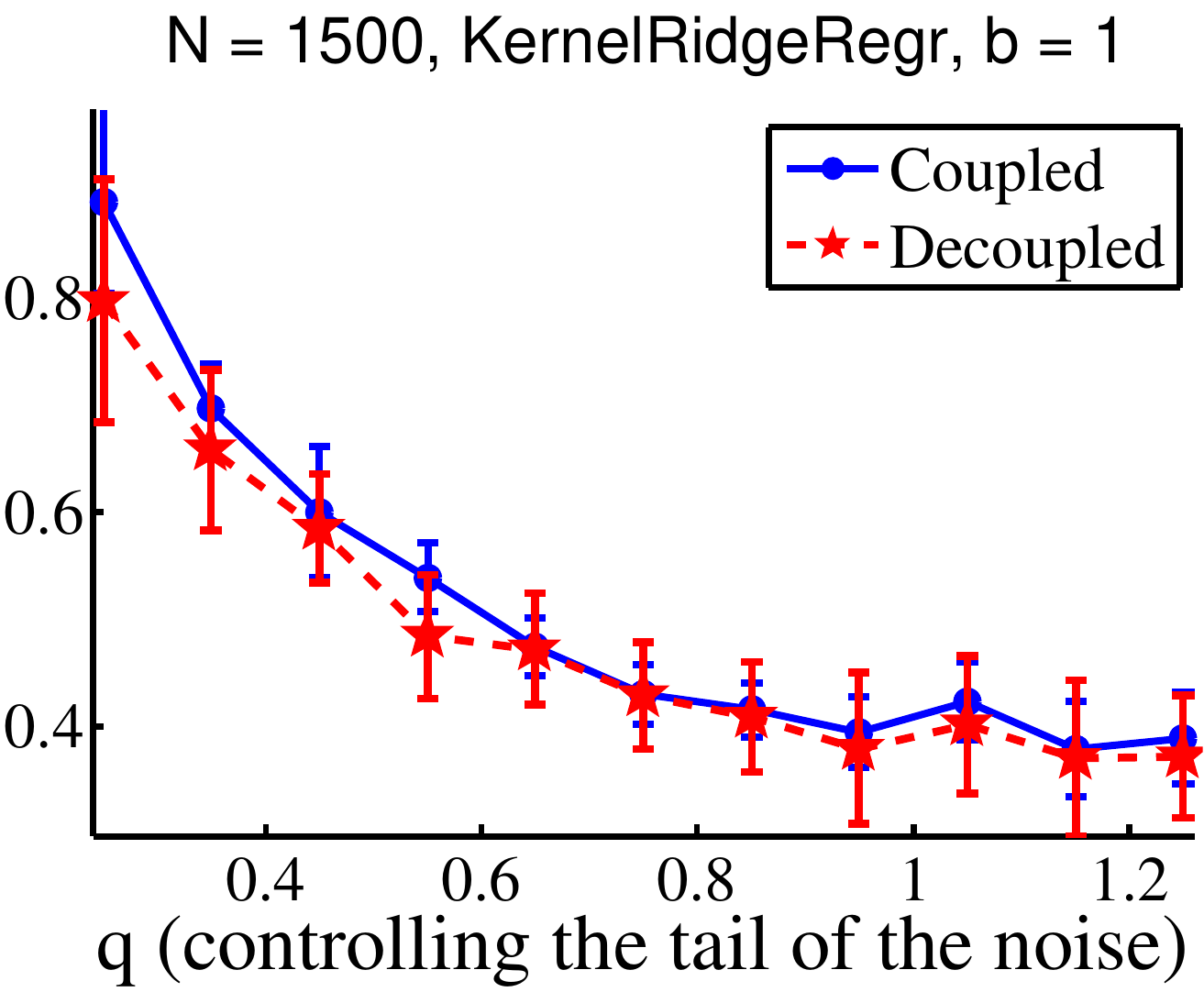}
  \label{fig:Q_KerRidgeRegr}}
\end{center}
\caption[font=small,labelsep=none] { \label{fig:results2}
Plot (a): same experiment as Fig. \ref{fig:N_KerRegr} but using KRR.
Plot (b): same experiment as Fig. \ref{fig:Q_KerRegr} but using KRR.
For properly tuned parameters, the selection of regression method does not seem to matter for the causal inference results.}
\end{figure*}

\section{Omitted Proofs: Section \ref{sec:decoupled}}
\begin{proof}[Proof of Lemma \ref{lem:propInduced}]
Note that, by assumption, both $p_X$ and $p_\eta$ are bounded.
 For any $x, y\in \real$, we have
 \begin{align}
  p_{X, Y}(x, y) = p_X(x)\cdot p_{Y|x}(y)=p_X(x)\cdot p_\eta(y - f(x)). \label{eq:pxy}
 \end{align}
 therefore $\frac{d}{dx}p_{X, Y}(x, y)$ is given by
 $$\frac{d}{dx}p_X(x)\cdot p_\eta(y - f(x)) - \frac{d}{dx}f(x)\cdot \frac{d}{dx}p_\eta(y - f(x)).$$

It is clear that $\sup_{x, y}\frac{d}{dx}p_{X, Y}(x, y)<\infty$. Similarly
$\sup_{x, y}\frac{d}{dy}p_{X, Y}(x, y)<\infty$. Also, since $p_\eta$ is bounded, we have from (\ref{eq:pxy}) that
for $\abs{t}$ sufficiently large, $p_{X, Y}(t, y)=0$ independent of $y$. Also, since $f$ and $p_X$ are bounded,
we have, independent of $x$, that for $\abs{t}$ sufficiently large, $p_{X, Y}(x, t)<C't^{-\alpha}$ for some $C'>0$.

Next the density of the residual $\eta_{Y, f'}$ of a function $f'$ is easily obtained as follows for any $t\in \real$.
\begin{align}
 p_{\eta_{Y, f'}}(t) &=\int_\real p_X(x)\cdot p_{Y|x}(t+f'(x))\, dx \nonumber\\
 &= \int_\real p_{X, Y}(x, t+f'(x))\, dx \label{eq:densityofresidual}\\
 &= \int_\real p_X(x)\cdot p_\eta(t+f'(x) - f(x))\, dx. \nonumber
\end{align}
Thus if $f'$ is bounded, there exists $T'>0$ such that for $\abs{t}>T'$,
$$p_{\eta_{Y, f'}}(t)\leq C' \abs{t}^{-\alpha}\cdot \int_\real p_X(x)\, dx = C' \abs{t}^{-\alpha},$$
where $\alpha$ is the same as for the bound on $p_\eta$ from the assumption.

We have similarly for any function $g$ that,
\begin{align}
 p_{\eta_{X, g}}(t) &=\int_{p_Y>0}p_Y(y)\cdot p_{X|y}(t+g(y))\, dy \nonumber\\
 &=\int_\real p_{X, Y}(t+g(y), y) \, dy. \label{eq:densityofresidual2}\\
 &= \int_\real p_X(t+g(y))\cdot p_\eta(y - f(t+ g(y)))\, dy.\nonumber
\end{align}
If $g$ is bounded, then for $t$ sufficiently large, $p_X(t+g(y))=0$ for all $y$ so $p_{\eta_{X, g}}(t)=0$.
\end{proof}

\section{Omitted Proofs: Section \ref{sec:coupled}}
We denote the random $n$-samples as $X^n\triangleq \braces{X_i}_1^n$ and $Y^n\triangleq\braces{Y_i}_1^n$,
throughout this section.

We bound $R_n(f_n)$ and $R_n(g_n)$ in Lemma \ref{lem:Rn} which makes use of Lemma \ref{lem:book}.
We note that the r.v.'s $X$ and $Y$ are interchangeable in the two lemmas \ref{lem:Rn} and \ref{lem:book},
since they do not assume $X\to Y$.

\begin{lemma}[\cite{GKKW:81}]
 \label{lem:book}
 For any positive $f:\real \to \real$ such that $\expec f < \infty$, there exists $c_0$ such that
  $\expec_{X, X^n} \braces{\frac{1}{n_{X, h}}\sum_{X_i: \abs{X_i-X}<h} f(X_i)} \leq c_0 \expec f(X).$
\end{lemma}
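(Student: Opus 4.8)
The statement is the classical ``expected reciprocal bin-count'' bound of \cite{GKKW:81}, and the plan is to reconstruct the standard argument. Throughout I would adopt the convention $0/0=0$, and for $x\in\real$ write $A_x\triangleq\braces{t\in\real:\abs{t-x}<h}$, $p_x\triangleq P_X(A_x)$, and $N_x\triangleq n_{x,h}=\sum_{i=1}^n\ind{X_i\in A_x}$, which is $\mathrm{Binomial}(n,p_x)$. The first step is to condition on the test point $X=x$. Since a summand with $X_i\in A_x$ already forces $N_x\geq 1$, the inner average equals $\sum_{i=1}^n f(X_i)\ind{X_i\in A_x}/N_x$, and because $X_1,\dots,X_n$ are exchangeable all $n$ summands have equal expectation; hence the conditional expectation of the inner average given $X=x$ is exactly $n\,\ex{f(X_1)\ind{X_1\in A_x}/N_x}$.

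Next I would condition further on $X_1$. On the event $\braces{X_1\in A_x}$ we have $N_x=1+N'$ with $N'\triangleq\sum_{i=2}^n\ind{X_i\in A_x}$, which is $\mathrm{Binomial}(n-1,p_x)$ and independent of $X_1$; the elementary inverse-moment identity $\ex{(1+N')^{-1}}=\paren{1-(1-p_x)^n}/(n\,p_x)\leq 1/(n\,p_x)$, which follows from $\frac{1}{k+1}\binom{n-1}{k}=\frac1n\binom{n}{k+1}$, then yields, for every $x$ with $p_x>0$, the per-$x$ bound
\begin{align*}
n\,\ex{f(X_1)\ind{X_1\in A_x}/N_x}\leq \frac{1}{p_x}\,\ex{f(X_1)\ind{X_1\in A_x}}=\frac{1}{P_X(A_x)}\int_{A_x}f\,dP_X.
\end{align*}
The remaining $x$ (those with $p_x=0$) satisfy $N_x=0$ almost surely and carry no $P_X$-mass, so they contribute nothing.

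It then remains to take the expectation over $X\sim P_X$ and apply Fubini--Tonelli (all integrands are nonnegative):
\begin{align*}
\expec_{X, X^n}\braces{\frac{1}{n_{X,h}}\sum_{X_i:\abs{X_i-X}<h}f(X_i)}\leq\int f(t)\paren{\int\frac{\ind{\abs{t-x}<h}}{P_X(A_x)}\,dP_X(x)}dP_X(t),
\end{align*}
so the lemma reduces to bounding the inner integral by an absolute constant $c_0$, uniformly in $t$ and $h$. For this I would use a covering argument in $\real$: cover the open interval $A_t$ by the three half-length intervals $B_j\triangleq\braces{s:\abs{s-z_j}<h/2}$ with $z_1=t-h/2$, $z_2=t$, $z_3=t+h/2$; for any $x\in B_j$ the triangle inequality gives $B_j\subseteq A_x$, hence $P_X(A_x)\geq P_X(B_j)$, so that $\int\ind{\abs{t-x}<h}/P_X(A_x)\,dP_X(x)\leq\sum_{j=1}^3\int_{B_j}dP_X(x)/P_X(B_j)\leq 3$. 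Combining with the display gives the lemma with $c_0=3$.

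The argument is mostly routine; the one substantive ingredient is the covering bound on $\int\ind{\abs{t-x}<h}/P_X(A_x)\,dP_X(x)$ (in $\real^d$ this constant would depend on $d$, which is why the lemma only asserts the existence of some $c_0$), and the only place needing genuine care is the bookkeeping of the degenerate events $N_x=0$ and $P_X(A_x)=0$, which the $0/0=0$ convention together with the $P_X$-nullity of those exceptional $x$ takes care of.
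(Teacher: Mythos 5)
Your proof is correct. Note that the paper never proves Lemma \ref{lem:book}: it is imported as a black box from \cite{GKKW:81}, so there is no in-paper argument to compare against. Your reconstruction is exactly the classical textbook route — exchangeability to reduce the inner average to $n$ times the first summand, the inverse binomial moment bound $\expec\,[(1+\mathrm{Bin}(n-1,p_x))^{-1}]\le 1/(np_x)$ after conditioning on $X_1$, Tonelli, and the covering argument showing $\int \ind{\abs{t-x}<h}/P_X(A_x)\,dP_X(x)\le 3$ via three half-length intervals (which is where the constant becomes dimension-dependent in $\real^d$) — and your bookkeeping of the degenerate cases ($N_x=0$, $P_X(A_x)=0$) through the $0/0=0$ convention and the $P_X$-nullity of the complement of the support is sound.
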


%
\begin{proof}[Proof of Lemma \ref{lem:Rn}]
We simply have to show that $\expec{\frac{1}{n}\sum_1^n \abs{f_n(X_i)-f(X_i)}^2}\xrightarrow{n\to \infty}0$ since
by H\"{o}lder's inequality and Jensen's inequalities, for any $\phi_n(\cdot)$,
\begin{align*}
 \expec{\frac{1}{n}\sum_i \abs{\phi_n(X_i)}} \leq& \expec{\sqrt{\frac{1}{n}\sum_i \abs{\phi_n(X_i)}^2}}\\
\leq& \sqrt{\expec{\frac{1}{n}\sum_i \abs{\phi_n(X_i)}^2}}.
\end{align*}

For assumption ({\rm{i}}), pick any $\epsilon>0$. We will show that for $n$ sufficiently large, the above expectation is at most $(7+ 3c_0)\epsilon$,
where $c_0$ is as in Lemma \ref{lem:book}. The further claim of assumption ({\rm{ii}}) will be obtained along the way.

First condition on $X^n$, fixing $x =X_i$ for some $X_i$, and taking expectation
with respect to the randomness in
$Y^n\triangleq\braces{Y_i}_1^n$.
We have by a standard bias-variance decomposition (see e.g. \cite{GKKW:81}) that
$\Expectation_{Y^n|X^n}\abs{f_n(x) - f(x)}^2$
\begin{align}
 \leq \frac{C}{n_{x, h}} + \abs{\frac{1}{n_{x, h}} \sum_{\abs{X_j-x}<h} f(X_j) - f(x) }^2
 = A_x + B_x, \label{eq:biasvar}
 \end{align}
 for some $C$ depending on the variance of $Y$.

We start with a bound on the first term of (\ref{eq:biasvar}). Pick an interval $S$ such that $P_X(\real\setminus S)<\epsilon$.

Consider an $(h/2)$-cover $Z$ of $S$ such that for every $z\in S$, the interval $[z-h/2, z+h/2]$ is contained in $S$.
We can pick such a $Z$ of size at most $2\Sigma(S)/h$. Note that for any $x\in [z-h/2, z+h/2]$,
$n_{x, h}\geq n_{z, h/2} \triangleq \abs{\braces{X_i: \abs{z-X_i}<h/2}}$.
We then have
\begin{align*}
\frac{1}{n}\sum_{i=1}^n A_{X_i} &= \frac{1}{n}\sum_{i=1}^n \frac{C}{n_{X_i, h}}\paren{\ind{X_i\in S} + \ind{X_i\notin S}}\\
&\leq  \frac{1}{n}\sum_{j=1}^n \frac{C}{n_{X_i, h}}\ind{X_i\in S} + \frac{1}{n}\sum_{i=1}^n \ind{X_i\notin S}\\
&\leq \frac{1}{n}\sum_{z\in Z}\sum_{X_i: \abs{z-X_i}\leq h/2}\frac{C}{n_{X_i, h}} + \frac{1}{n}\sum_{i=1}^n \ind{X_i\notin S}\\
&\leq \frac{1}{n}\sum_{z\in Z}\frac{C\cdot n_{z, h/2}}{n_{z, h/2}} + \frac{1}{n}\sum_{i=1}^n \ind{X_i\notin S}.
\end{align*}
Therefore by taking expectation over $X^n$ and letting $nh$ sufficiently large, we have
$$\expec \frac{1}{n}\sum_{i=1}^n A_{X_i} \leq \frac{2C\cdot\Sigma (S)}{n h} + P_X(\real\setminus S)\leq 2\epsilon.$$
Under assumption ({\rm{ii}}), pick $S$ larger than the support of $P_X$,
we have by the same equation above that for large $n$
$$\expec \frac{1}{n}\sum_{i=1}^n A_{X_i} \leq \frac{2C\cdot\Sigma (S)}{n h} = \frac{2C\cdot\Sigma (S)}{c_1n^{1-\alpha}}.$$

 We now turn to the second term of (\ref{eq:biasvar}). Under assumption ({\rm{ii}}) the function $f$ is Lipschitz continuous
 and we therefore have for some constant $c_f$ that
 $A_x \leq c_f h^2 = c_f c_1n^{-2\alpha}$. Combining with the bound on $A_x$ gives the result for assumption ({\rm{i}}).

 For assumption ({\rm{i}}) we proceed as follows. It is well known that bounded
 uniformly continuous functions are dense in $L_{2, P_X}$ for any $P_X$.
 Therefore let $\ft$ be a bounded uniformly continuous function such that $\norm{\ft - f}_{2, P_X}<\sqrt{\epsilon}$. Since $h=h(n)\to 0$, we have $\sup_{\abs{x, x'}<h}\abs{\ft(x) - \ft(x')}^2 < \epsilon$ for $n$ sufficiently
 large.
 The second term of the r.h.s. of the above equation (\ref{eq:biasvar}) can then be bounded as follows. If $n_{x, h}=1$, then
 $B_x = 0$. Otherwise, if $n_{x, h}>1$, we have
 \begin{align*}
B_x \leq& \frac{3}{n_{x, h}} \sum_{\abs{X_j-x}<h}
\left( \abs{f(X_j) - \ft(X_j)}^2 \right. \\
&\left. +  \abs{\ft(X_j) - \ft(x)}^2 + \abs{\ft(x) - f(x)}^2\right) \\
\leq& 3\epsilon +\abs{\ft(x) - f(x)} \\
&+ \frac{3}{n_{x, h}} \sum_{\abs{X_j-x}<h}\abs{f(X_j) - \ft(X_j)}^2\\
\leq& 3\epsilon + \paren{\frac{3}{n_{x, h}} + 1}\abs{\ft(x) - f(x)} \\
&+ \frac{3}{n_{x, h} -1} \sum_{\abs{X_j-x}<h, X_j\neq x}\abs{f(X_j) - \ft(X_j)}^2.
\end{align*}

Therefore taking expectation over $X^n$, and applying Lemma \ref{lem:book} to the second term above for $x=X_i$,
we have
\begin{align*}
 \expec B_{X_i} \leq& 3\epsilon + 2\expec_X\abs{\ft(X) - f(X)}^2 \\
 &+ 3c_0\expec_X\abs{\ft(X) - f(X)}^2 = (5+3c_0)\epsilon,
\end{align*}
so that $\expec \frac{1}{n}\sum_i B_{X_i} \leq (5+3c_0)\epsilon$.
\end{proof}

\end{document}